\documentclass{article}
\usepackage{microtype}
\usepackage{graphicx}
\usepackage{subcaption}
\usepackage{booktabs} %

\usepackage{hyperref}

\usepackage[accepted]{icml2025}

\usepackage{amsmath}
\usepackage{amssymb}
\usepackage{mathtools}
\usepackage{amsthm}

\usepackage[capitalize,noabbrev]{cleveref}

\usepackage[utf8]{inputenc} %
\usepackage[T1]{fontenc}    %
\usepackage{url}            %
\usepackage{amsfonts}       %
\usepackage{nicefrac}       %
\usepackage{xcolor}         %
\usepackage{wrapfig}

\usepackage{layouts}
\usepackage{nccmath}
\usepackage{bm}
\usepackage{bbm}
\usepackage{tikz}
\usetikzlibrary{bayesnet,arrows,fit,positioning,shapes,calc,decorations.pathmorphing,matrix,shapes.geometric,automata,decorations.text,shadows} %
\usepackage{stackengine}
\usepackage{paralist}
\usepackage{tabularx}
\usepackage{scalerel,xparse,textcomp}
\usepackage{etoolbox}
\usepackage{enumitem}
\usepackage{thmtools, thm-restate}
\usepackage{multirow}
\usepackage{changepage} %
\usepackage{adjustbox} %
\usepackage{makecell}
\usepackage{lipsum}
\usepackage{cancel}
\usepackage{multicol}
\usepackage{pifont}
\usepackage{stmaryrd}
\usepackage{trimclip}
\usepackage{tikzducks}
\usepackage[normalem]{ulem}
\newcommand{\stkoutmath}[1]{\ifmmode\text{\sout{\ensuremath{#1}}}\else\sout{#1}\fi}

\usepackage{soul}
\usepackage{titlesec}
\usepackage{setspace}

\usepackage{float}
\usepackage{verbatim}
\usepackage{array}
\usepackage{xspace}

\usepackage[textsize=tiny]{todonotes}

\usepackage{multibib}
\newcites{appendix}{Additional References in Appendix}

\newcommand\indep{\protect\mathpalette{\protect\independenT}{\perp}}
\def\independenT#1#2{\mathrel{\rlap{$#1#2$}\mkern2mu{#1#2}}}

\makeatletter
\newcommand*\bigcdot{\mathpalette\bigcdot@{.5}}
\newcommand*\bigcdot@[2]{\mathbin{\vcenter{\hbox{\scalebox{#2}{$\m@th#1\bullet$}}}}}
\makeatother

\makeatletter

\makeatother

\makeatletter

\setbox0\hbox{$\xdef\scriptratio{\strip@pt\dimexpr
    \numexpr(\sf@size*65536)/\f@size sp}$}

\newcommand{\scriptveryshortarrow}[1][3pt]{{%
    \hbox{\rule[\scriptratio\dimexpr\fontdimen22\textfont2-.2pt\relax]
               {\scriptratio\dimexpr#1\relax}{\scriptratio\dimexpr.4pt\relax}}%
   \mkern-4mu\hbox{\let\f@size\sf@size\usefont{U}{lasy}{m}{n}\symbol{41}}}}

\newcommand{\dashdash}{\textrm{ --- }}
\def\circarrow{\hbox{$\circ$}\kern-1.5pt\hbox{$\rightarrow$}}
\def\arrowcirc{\hbox{$\leftarrow$}\kern -1pt\hbox{$\circ$}}
\def\circcirc{\hbox{$\circ$}\kern-1.5pt\hbox{\rule[0.5ex]{0.8em}{0.5pt}}\kern -1pt\hbox{$\circ$}}
\def\circdash{\hbox{$\circ$}\kern-1.5pt\hbox{$\textrm{---}$}}
\makeatother

\theoremstyle{plain}

\newtheorem{proposition}{Proposition}

\theoremstyle{remark}

\theoremstyle{definition}
\newtheorem{definition}{Definition}

\newtheorem{example}{Example}
\AtBeginEnvironment{example}{%
  \pushQED{\qed}%
}
\AtEndEnvironment{example}{\popQED\endexample}

\crefformat{section}{\S#2#1#3} %
\crefformat{subsection}{\S#2#1#3}
\crefformat{subsubsection}{\S#2#1#3}

\DeclarePairedDelimiterX\braket[2]{\langle}{\rangle}{#1 \delimsize\vert #2}

\newcommand{\Gcal}{\mathcal{G}}

\newcommand{\pa}{\operatorname{pa}}
\newcommand{\ch}{\operatorname{ch}}
\newcommand{\anc}{\operatorname{an}}
\newcommand{\des}{\operatorname{de}}

\definecolor{mplblue}{rgb}{0.12156863, 0.46666667, 0.70588235}

\icmltitlerunning{Latent Variable Causal Discovery under Selection Bias}

\begin{document}
\twocolumn[
\icmltitle{Latent Variable Causal Discovery under Selection Bias}
\begin{icmlauthorlist}
\icmlauthor{Haoyue Dai}{cmu,mbzuai}
\icmlauthor{Yiwen Qiu}{cmu}
\icmlauthor{Ignavier Ng}{cmu}
\icmlauthor{Xinshuai Dong}{cmu}
\icmlauthor{Peter Spirtes}{cmu}
\icmlauthor{Kun Zhang}{cmu,mbzuai}
\end{icmlauthorlist}
\icmlaffiliation{cmu}{Carnegie Mellon University}
\icmlaffiliation{mbzuai}{Mohamed bin Zayed University of Artificial Intelligence}
\icmlkeywords{Machine Learning, ICML}
\vskip 0.3in
]
\printAffiliationsAndNotice{}

\begin{abstract}

Addressing selection bias in latent variable causal discovery is important yet underexplored, largely due to a lack of suitable statistical tools: While various tools beyond basic conditional independencies have been developed to handle latent variables, none have been adapted for selection bias. We make an attempt by studying rank constraints, which, as a generalization to conditional independence constraints, exploits the ranks of covariance submatrices in linear Gaussian models. We show that although selection can significantly complicate the joint distribution, interestingly, the ranks in the biased covariance matrices still preserve meaningful information about both causal structures and selection mechanisms. We provide a graph-theoretic characterization of such rank constraints. Using this tool, we demonstrate that the one-factor model, a classical latent variable model, can be identified under selection bias. Simulations and real-world experiments confirm the effectiveness of using our rank constraints.

\end{abstract}

\section{Introduction}
\label{sec:introduction}

At the core of understanding complex systems lies causal discovery, the identification of causal relations from observational data~\citep{spirtes2000causation,pearl2009models}. In many real-world scenarios, the variables of interest are latent constructs that cannot be directly observed or quantized, while the observed variables serve merely as indirect measurements. For instance, in psychological or political-economic surveys, measured responses serve as proxies for latent personality traits or political orientations. Recovering the causal structure among these latent variables—referred to as latent variable causal discovery—is essential for understanding and reasoning, yet remains a challenging task.\looseness=-1 %

Furthermore, a typical assumption in causal discovery, whether involving latent variables or not, is the data being randomly sampled from the underlying population. In practice, however, this is often violated due to selection bias—preferential inclusion of data points based on unknown mechanisms~\citep{heckman1977sample}. Returning to the earlier examples, individuals with certain traits may be more willing to take a psychological survey, and methods like mail or phone used to recruit respondents can systematically skew groups based on factors like economic and education level. Ignoring such bias can severely distort the inferred causal structures. Moreover, uncovering the selection mechanisms is also crucial for understanding the data. Hence, there is a pressing need for methods of latent variable causal discovery that can address selection bias.  %

Despite its importance, addressing selection bias in latent variable causal discovery remains almost unexplored, to the best of our knowledge. One may first recall the Fast Causal Inference (FCI) algorithm~\citep{spirtes1999algorithm}, which indeed exploits the conditional independence (CI) constraints in data under both hidden confounding and selection bias. However, FCI is typically not regarded as a method of latent variable causal discovery, as it focuses solely on causal relations among observed variables, with no intension or capability to identify those among latent variables. In short, though FCI can handle both hidden confounding and selection bias, and is already maximally informative under nonparametric CI constraints~\citep{richardson2002ancestral,zhang2008completeness}, it is still not informative enough for latent variable causal discovery. Therefore, new statistical tools that go beyond CI constraints must be developed.\looseness=-1 %

Many new tools beyond CI constraints have thus been developed, typically by imposing additional parametric assumptions. These include rank constraints~\citep{sullivant2010trek}, equality constraints~\citep{drton2018algebraic}, high-order moment constraints~\citep{xie2020generalized,adams2021identification,dai2022independence,chen2024identification}, constraints based on matrix decomposition~\citep{anandkumar2013learning}, copula models~\citep{cui2018learning}, and mixture oracles~\citep{kivva2021learning}. A detailed review of these tools and the latent variable causal discovery algorithms based on them is provided in~\cref{app:related_work}.

However, all these new tools were developed for latent variables solely, with none adapted to selection bias. While various parametric models for selection were also studied, their focus is either causal inference~\citep{bareinboim2012controlling,correa2019identification} or bivariate orientation~\citep{zhang2016identifiability,kaltenpoth2023identifying}. For causal discovery, the only tool currently available is still basic CI constraints.\looseness=-1

This creates a huge gap: while more powerful latent causal discovery methods now exist, once selection bias is introduced, these newer tools must be set aside, leaving us with only CI constraints—and effectively reverting back to FCI. %

Bridging this gap is the goal of our work. We aim to develop tools that go beyond CI constraints and address both latent variables and selection bias. The core challenge lies in modeling data under selection, which can be far more complex than handling latent variables. For instance, marginalizing a linear Gaussian model over latent variables still yields a Gaussian distribution with a closed-form covariance in terms of model parameters. However, under selection, even simple truncation leads to a truncated Gaussian distribution, making its covariance and higher moments hard to express and interpret~\citep{kan2017moments}. This situation only worsens with more complex selection mechanisms.

To address this challenge, we try to avoid explicitly modeling the full distribution under selection and instead focus on invariant statistical patterns, much like nonparametric CI constraints with the corresponding d-separation graphical criterion. This leads us to rank constraints, a direct generalization to the CI constraints, which assumes data generated by a linear Gaussian model and exploits the (low) ranks of covariance submatrices. In this case, CIs correspond to zero partial correlations, which manifest as low ranks in covariance submatrices. There are also other low ranks beyond zero partial correlations. Rank constraints captures them with graphical criterion beyond d-separation, that is, \textit{t-separation}. Details and examples are provided in~\cref{subsec:basics_on_rank_constraints}.\looseness=-1

But, do rank constraints remain informative in selection-biased data, where the data no longer follows a linear Gaussian model—or even a linear structural equation model? Interestingly, the answer is yes. For data generated by a linear Gaussian model and subjected to selection, even though the biased covariance matrix becomes arbitrarily complex to express, we show that the ranks of these covariances remain well-defined and capture meaningful structural information about both causal and selection mechanisms. Illustrative examples are provided in~\cref{subsec:extending_to_selection_motivation}. Specifically, we assume \textit{linear selection mechanisms}, which will be formally defined later. For now, let us note that it is general, with many existing parametric selection models as special instances.\looseness=-1

As shown in~\cref{fig:venn_our_contribution}, the main contribution of this work is the \textit{generalized rank constraints}, which, to the best of our knowledge, is the first tool beyond CI constraints to handle selection, and thus enables latent variable causal discovery under selection bias. Just as the original rank constraints generalized CI constraints and enabled algorithms beyond FCI, our generalized rank constraints pave the way for algorithms to uncover both latent causal and selection structures.\looseness=-1

\begin{figure}
    \centering
    \includegraphics[width=\linewidth]{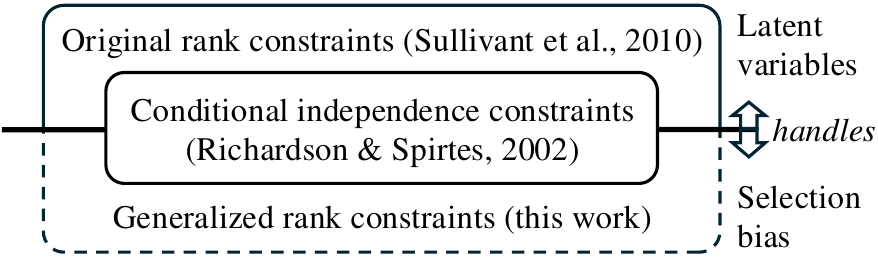}
    \vspace{-1.8em}
    \caption{Illustration of our contribution: while CI constraints handles both latent variables and selection bias, original rank constraints extends it only to latent variables. We bridge this gap by generalizing rank constraints to handle also selection bias.}
    \vspace{-1.5em}
    \label{fig:venn_our_contribution}
\end{figure}

The remainder of this paper is organized as follows. In~\cref{sec:motivation}, we introduce original rank constraints in settings without selection bias, and then provide an illustrative example showing how ranks can retain information under selection. In~\cref{sec:model}, we formally characterize the generalized rank constraints under selection bias, presenting a precise graphical criterion. We show that these ranks offer insights into both latent causal and selection structures. In~\cref{sec:approach}, we apply the generalized rank constraints to the one-factor model, a classical latent variable model, and demonstrate its identifiability under selection bias. In~\cref{sec:experiments}, we validate the effectiveness of our method through simulations and real-world experiments, showing its ability to uncover latent causal and selection mechanisms. Finally, in~\cref{sec:conclusions}, we discuss potential limitations.

\textbf{Notations on matrices.} \ \ 
For a matrix $M$, we let $M_{i,j}$ be its $(i,j)$-th entry. For two index sets $A, B$, we let $M_{A,B} = (M_{a,b})_{a\in A, b\in B}$ be the submatrix of $M$ with rows indexed by $A$ and columns indexed by $B$. For a finite set $A$, we denote by $|A|$ the cardinality of $A$. %

\textbf{Notations on graphs.} \ \ 
In a directed acyclic graph (DAG) $\Gcal$, for any vertices $a, b$, we say $a$ is a \textit{parent} of $b$ and $b$ is a \textit{child} of $a$ if $a\rightarrow b$ is an edge in $\Gcal$, denoted by $a\in \pa_\Gcal(b)$ and $b\in \ch_\Gcal(a)$; $a$ is an \textit{ancestor} of $b$ and $b$ is a \textit{descendant} of $a$ if $a=b$ or there is a directed path $a \rightarrow \cdots \rightarrow b$ in $\Gcal$, denoted by $a\in \anc_\Gcal(b)$ and $b\in \des_\Gcal(a)$. These notations extend to sets: e.g., for any vertex set $A$, $\anc_\Gcal(A) \coloneqq \bigcup_{a \in A} \anc_\Gcal(a)$.

\begin{figure*}[!t]
\centering
\begin{minipage}[b][0.40\textwidth][t]{0.15\textwidth}
\centering
\begin{align*}\hspace{-3.2em}\text{(a)}\end{align*}
\vspace{1em}
\begin{tikzpicture}[->,>=stealth,shorten >=1pt,auto,semithick,inner sep=0.2mm,scale=0.7]
\tikzstyle{Lstyle} = [draw, rectangle, minimum size=0.5cm, fill=gray!20]
\tikzstyle{Xstyle} = [draw, circle, minimum size=0.6cm]
\node[Lstyle] (L) at (2.5, 1.4) {$L$}; %
\node[Xstyle] (X1) at (1, 0) {$X_1$};         %
\node[Xstyle] (X2) at (2, 0) {$X_2$};          %
\node[Xstyle] (X3) at (3, 0) {$X_3$};          %
\node[Xstyle] (X4) at (4, 0) {$X_4$};           %
\draw [->] (L) to (X1);
\draw [->] (L) to (X2);
\draw [->] (L) to (X3);
\draw [->] (L) to (X4);
\end{tikzpicture}
\begin{align*}
    &\bullet L \sim \mathcal{N}(0,\sigma_L^2). \\
    &\bullet\forall i=1,\ldots,4, \\
    & \ \left\{
    \begin{aligned}
        Z_i &= \lambda_i L + E_i, \\
        E_i &\sim \mathcal{N}(0,\sigma_i^2).
    \end{aligned}
    \right.
\end{align*}
    \end{minipage}
\begin{minipage}[b][0.40\textwidth][t]{0.20\textwidth}
\centering
\vspace{5.2em}
\[
\hspace{-0.5em}
\Rightarrow \left\{
\begin{aligned}
    \Sigma_{X_1,X_2} &= \lambda_1 \lambda_2 \sigma_L^2 \\
    \Sigma_{X_1,X_3} &= \lambda_1 \lambda_3 \sigma_L^2 \\
    \Sigma_{X_1,X_4} &= \lambda_1 \lambda_4 \sigma_L^2 \\
    \Sigma_{X_2,X_3} &= \lambda_2 \lambda_3 \sigma_L^2 \\
    \Sigma_{X_2,X_4} &= \lambda_2 \lambda_4 \sigma_L^2 \\
    \Sigma_{X_3,X_4} &= \lambda_3 \lambda_4 \sigma_L^2 \\
\end{aligned}
\right.
\]
\end{minipage}
\begin{minipage}[b][0.40\textwidth][t]{0.01\textwidth}
\phantom{x}\rule{0.7pt}{43ex} %
\end{minipage}
\begin{minipage}[b][0.40\textwidth][t]{0.19\textwidth} %
\centering
\begin{align*}\hspace{-4.2em}\text{(b)}\end{align*}
\vspace{0.3em}

\begin{tikzpicture}[->,>=stealth,shorten >=1pt,auto,semithick,inner sep=0.2mm,scale=0.7]
    \tikzstyle{Sinner} = [draw, circle, minimum size=0.5cm, pattern=north east lines, pattern color=gray]
    \tikzstyle{Souter} = [draw, circle, minimum size=0.6cm]
    \tikzstyle{Xstyle} = [draw, circle, minimum size=0.6cm]
    \node[Sinner] (Sdummy) at (2.5, -1.4) {}; %
    \node[Souter] (S) at (2.5, -1.4) {$Y$};
    \node[Xstyle] (X1) at (1, 0) {$X_1$};         %
    \node[Xstyle] (X2) at (2, 0) {$X_2$};          %
    \node[Xstyle] (X3) at (3, 0) {$X_3$};          %
    \node[Xstyle] (X4) at (4, 0) {$X_4$};           %
    \draw [->] (X1) to (S);
    \draw [->] (X2) to (S);
    \draw [->] (X3) to (S);
    \draw [->] (X4) to (S);
    \end{tikzpicture}
\begin{align*}
    &\bullet \forall i=1,\ldots,4, \\
    & \ \ \ \ X_i \sim \mathcal{N}(0,\sigma_i^2). \\
    & \bullet Y = \sum_{i=1}^{4}{\lambda_i X_i}.\\
    & \bullet\text{Only keep points}\\
    & \ \ \ \ \text{with } a < Y < b.\\
\end{align*}
    \end{minipage}
\begin{minipage}[b][0.40\textwidth][t]{0.42\textwidth} %
    \centering
    \includegraphics[width=\textwidth]{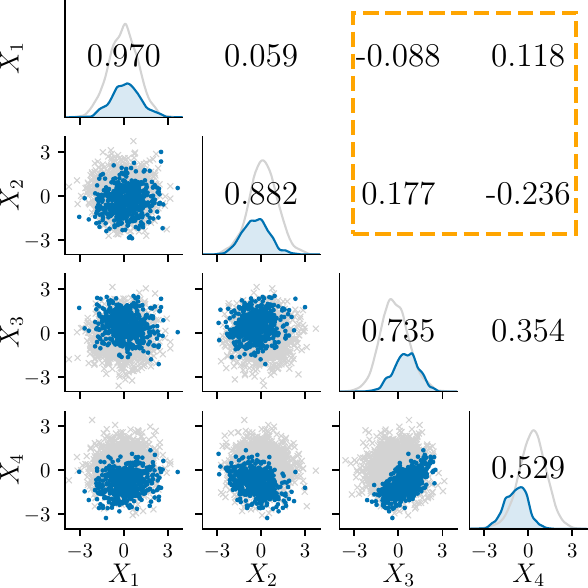}
\end{minipage}
\caption{Illustrative examples of rank constraints. White circles are observed variables. Grey squares are latent variables. Lined double circles are selection response variables (i.e., only samples with specific values of them are collected in data). We use this color scheme throughout. (a) shows the original Tetrad structure from~\citet{spearman1914theory}. The covariance terms represented by model parameters are given. Since the distribution is simply joint Gaussian, their scatterplot is omitted. (b) shows an ``inverse Tetrad structure,'' where four originally independent variables are later truncated based on a linear sum of them. An example with $\sigma_i^2=1$, $\lambda_1,\lambda_2,\lambda_3,\lambda_4 = 1,-2,3,-4$, and $a,b=3,10$ is visualized in the scatterplot, showing both selected samples (`\textcolor{mplblue}{$\bullet$}') and unselected samples (`\textcolor{gray}{$\times$}'), alongside the covariance values in the selected data (`\textcolor{mplblue}{$\bullet$}'). The dashed box highlights one of the low-rank structures.\vspace{-1em}}
\label{fig:scatterplot_example}
\end{figure*}

\vspace{-0.5em}
\section{Motivation}
\label{sec:motivation}
\vspace{-0.2em}

In this section, we provide the background and motivation behind our approach. In~\cref{subsec:basics_on_rank_constraints}, we review the basics of the original rank constraints without selection bias and illustrate how latent variables can leave traces in the covariances among observed variables. Building on this intuition, in~\cref{subsec:extending_to_selection_motivation}, we provide an illustrative example showing how selection may also leave traces in the biased covariances.

\vspace{-0.5em}
\subsection{Preliminaries on Original Rank Constraints}
\label{subsec:basics_on_rank_constraints}

Let us provide the background to and definitions of the rank

constraints and its corresponding t-separation graphical criterion, as originally established in~\citep{sullivant2010trek}. At this stage, selection bias is yet to be introduced.

We consider a linear Gaussian causal model associated with a DAG $\Gcal$, in which random variables $X = (X_1, \dots, X_m)$ follow the data generating procedure:
\vspace{-0.5em}
\begin{equation}\label{eq:linear_sem}
    X = \Lambda X+E,\vspace{-0.5em}
\end{equation}
where $E$ are exogenous noise terms that follow a jointly independent Gaussian distribution, and $\Lambda$ is the weighted adjacency matrix that follows $\Gcal$, i.e., $X_i \rightarrow X_j$ is an edge in $\Gcal$ if $\Lambda_{j,i} \neq 0$. Let $\Phi$ be the diagonal covariance matrix of noise terms $E$. The covariance matrix of Gaussian variables $X$, denoted by $\Sigma$, can then be written as
\begin{equation}\label{eq:linear_sem_cov}
    \Sigma = (I - \Lambda)^{-1} \Phi (I - \Lambda)^{-\top}.
\end{equation}

Rank constraints investigates the ranks of submatrices in this covariance matrix $\Sigma$. It is first shown to have generalized conditional independencies as a special case:%

\begin{proposition}[Conditional independencies as low ranks; Prop. 2.2 in~\citet{sullivant2010trek}]\label{prop:ci_as_low_ranks}
    Let $A, B, C$ be disjoint subsets of $X$. Then the conditional independence $A \indep B | C$ holds, if and only if the submatrix $\Sigma_{A\cup C, B\cup C}$ has rank $|C|$.
\end{proposition}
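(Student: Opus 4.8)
The plan is to reduce the rank condition to the classical partial-covariance characterization of conditional independence for Gaussians, via a Schur-complement computation on the submatrix $\Sigma_{A\cup C,\,B\cup C}$. Throughout I will use that $\Sigma = (I-\Lambda)^{-1}\Phi(I-\Lambda)^{-\top}$ is positive definite, which follows from $\Phi$ being a positive diagonal matrix and $I-\Lambda$ being invertible (it is unit-triangular after a causal ordering). In particular every principal submatrix of $\Sigma$, and hence $\Sigma_{C,C}$, is invertible.

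First I would record the standard fact that, for the jointly Gaussian vector $(X_A, X_B, X_C)$, the conditional distribution of $(X_A, X_B)$ given $X_C$ is again Gaussian with covariance matrix not depending on the conditioning value and equal to the Schur complement of $\Sigma_{C,C}$; its $A$-$B$ block is the partial covariance $\Sigma_{A,B} - \Sigma_{A,C}\,\Sigma_{C,C}^{-1}\,\Sigma_{C,B}$. Since, for a jointly Gaussian vector, zero cross-covariance is equivalent to independence, $A \indep B \mid C$ holds if and only if this partial covariance matrix is the zero matrix.

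Next I would exploit the block structure of $M \coloneqq \Sigma_{A\cup C,\,B\cup C}$. Ordering the row index set as $A$ then $C$ and the column index set as $B$ then $C$, we have $M = \left(\begin{smallmatrix}\Sigma_{A,B} & \Sigma_{A,C}\\ \Sigma_{C,B} & \Sigma_{C,C}\end{smallmatrix}\right)$, whose bottom-right block $\Sigma_{C,C}$ is square, $|C|\times|C|$, and invertible. Writing the block factorization
\[
M = \begin{pmatrix} I & \Sigma_{A,C}\Sigma_{C,C}^{-1} \\ 0 & I \end{pmatrix}\begin{pmatrix} \Sigma_{A,B} - \Sigma_{A,C}\Sigma_{C,C}^{-1}\Sigma_{C,B} & 0 \\ 0 & \Sigma_{C,C}\end{pmatrix}\begin{pmatrix} I & 0 \\ \Sigma_{C,C}^{-1}\Sigma_{C,B} & I \end{pmatrix},
\]
where the outer two factors are invertible, gives $\rank(M) = \rank\!\big(\Sigma_{A,B} - \Sigma_{A,C}\Sigma_{C,C}^{-1}\Sigma_{C,B}\big) + \rank(\Sigma_{C,C}) = \rank\!\big(\Sigma_{A,B} - \Sigma_{A,C}\Sigma_{C,C}^{-1}\Sigma_{C,B}\big) + |C|$. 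Combining with the previous step, $\rank(M) = |C|$ if and only if the partial covariance matrix has rank $0$, i.e.\ vanishes identically, which is equivalent to $A \indep B \mid C$.

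The mathematical content here is light, so I do not expect a genuine obstacle; the only points requiring care are (i) justifying that $\Sigma_{C,C}$ is invertible, handled by positive-definiteness of $\Sigma$ (if one wanted to drop this, e.g.\ allow zero noise variances, the clean equivalence can fail because $\rank(\Sigma_{C,C})<|C|$ becomes possible), and (ii) the indexing bookkeeping, namely that the shared set $C$ occupies both the row set and the column set so that $M$ really has the displayed block form with a square invertible corner block. Everything else is routine linear algebra together with the textbook Gaussian conditioning identity.
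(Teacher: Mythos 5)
Your proof is correct, and it is essentially the argument the paper has in mind: \cref{prop:ci_as_low_ranks} is stated as a cited result from \citet{sullivant2010trek}, with the paper only remarking that it is a ``direct reformulation'' of zero partial correlations, which is exactly what your Schur-complement computation makes explicit. Indeed, the same block factorization (multiplying by unit triangular matrices to peel off the invertible $\Sigma_{C,C}$-type corner and reduce to the partial covariance) is precisely the device used in Step~1 of the paper's proof of \cref{thm:graphical_criterion_rank_constraints}, so your route matches the paper's technique; your side remarks on the invertibility of $\Sigma_{C,C}$ and the indexing of the shared set $C$ are appropriate and introduce no gap.
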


\cref{prop:ci_as_low_ranks} is a direct reformulation of the corresponding zero partial correlations, which, since $X$ are Gaussian, manifest as conditional independencies. However, there are also other low ranks beyond CI, as the following example shows.

\setcounter{example}{0} %
\renewcommand{\theexample}{1.\arabic{example}}
\begin{example}[Tetrad structure in~\citet{spearman1914theory}]\label{example:1.1_tetrad_low_ranks}
    Consider the graph in~\cref{fig:scatterplot_example}a. In this graph, for any choice of model parameters, the following three low ranks hold:
    \begin{align*}
        & \operatorname{rank}(\Sigma_{\{X_1,X_2\}, \{X_3,X_4\}}) = 1, \\
        & \operatorname{rank}(\Sigma_{\{X_1,X_3\}, \{X_2,X_4\}}) = 1, \\
        & \operatorname{rank}(\Sigma_{\{X_1,X_4\}, \{X_2,X_3\}}) = 1,
    \end{align*}
    as can be verified from the covariance terms provided in the figure. However, these low ranks do not follow from any CIs. In fact, there are no CIs among $\{X_1,X_2,X_3,X_4\}$.
\end{example}

Rank constraints explains where these extra low ranks come from. Similar to how \citet{pearl1987logic} used the d-separation graphical criterion to characterize CIs entailed in data, \citet{sullivant2010trek} characterized these entailed low ranks using a new criterion, namely, t-separation:

\begin{definition}[t-separation; reformulated from Def. 2.7 in~\citet{sullivant2010trek}]\label{def:t_separation}
    Let $A,B,C,D$ be four subsets of $X$ that need not be disjoint. Denote by $\Gcal_{\stkoutmath{C}}$ the remaining graph after removing nodes $C$ and their associated edges from $\Gcal$, and similarly $\Gcal_{\stkoutmath{D}}$. We then say the pair $(C,D)$ t-separates $(A, B)$ if $\anc_{\Gcal_{\stkoutmath{C}}}(A\backslash C) \cap \anc_{\Gcal_{\stkoutmath{D}}}(B\backslash D) = \varnothing$.
\end{definition}

Roughly speaking, to find node sets that t-separate $A$ from $B$, we are to find node sets whose removal from the graph makes $A$ and $B$ share no common ancestors, so that all pathways that carry shared information are blocked. Ranks in covariances reflect exactly the smallest size of such sets:

\begin{proposition}[Graphical criterion of rank constraints; Thm. 2.8 in~\citet{sullivant2010trek}]\label{prop:graphical_criterion_original_rank}
    In a DAG $\Gcal$, for any two subsets $A, B \subset X$ that need not be disjoint, the equality
    $$\operatorname{rank}(\Sigma_{A,B}) = \min \{|C|+|D|: (C,D) \text{ t-separates } (A,B)\}$$
    holds for generic choice of model parameters.
\end{proposition}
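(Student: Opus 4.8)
The plan is to prove the two inequalities separately: ``$\le$'' for \emph{every} choice of parameters, by an explicit low-rank factorization, and ``$\ge$'' on a Zariski-dense open set, by exhibiting a nonvanishing minor, so that equality holds generically. The workhorse is the \emph{trek rule}: expanding $(I-\Lambda)^{-1}=\sum_{k\ge 0}\Lambda^k$ shows that $((I-\Lambda)^{-1})_{i,j}$ is the sum, over directed paths $j\rightarrow\cdots\rightarrow i$ in $\Gcal$, of the products of the corresponding entries of $\Lambda$; plugging into~\eqref{eq:linear_sem_cov} gives $\Sigma_{a,b}=\sum_{T}\operatorname{wt}(T)$, where $T$ ranges over \emph{treks} between $a$ and $b$ --- pairs $(P^L,P^R)$ of directed paths with a common source (the ``top''), with $P^L$ ending at $a$ and $P^R$ ending at $b$ --- and $\operatorname{wt}(T)$ is the top's noise variance times all edge weights along $P^L$ and $P^R$. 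I will also use the reformulation of \cref{def:t_separation} in trek language: $(C,D)$ t-separates $(A,B)$ iff every trek between a vertex of $A$ and a vertex of $B$ meets $C$ on its $A$-side or meets $D$ on its $B$-side; one checks that the stated ancestral condition is precisely the nonexistence of a trek avoiding both.

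For ``$\le$'', fix a t-separating pair $(C,D)$ and split the treks between $A$ and $B$ into those whose $A$-side meets $C$ and those whose $A$-side avoids $C$; by t-separation the latter meet $D$ on the $B$-side. A trek of the first kind, cut at the $C$-vertex $c$ nearest its $A$-endpoint $a$, has weight equal to (weight of a trek between $c$ and $b$) times (weight of a directed path from $c$ to $a$ with no internal $C$-vertex), and summing over all such treks produces a term $H_{A,C}\,\Sigma_{C,B}$; a trek of the second kind, cut at a $D$-vertex on its $B$-side, similarly produces a term $F_{A,D}\,G_{D,B}$. Hence $\Sigma_{A,B}=H_{A,C}\Sigma_{C,B}+F_{A,D}G_{D,B}$ has rank $\le |C|+|D|$. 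Minimizing over t-separators --- a nonempty family, e.g.\ $(A,\varnothing)$, so the minimum is attained and is $\le\min(|A|,|B|)$ --- gives the bound.

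For ``$\ge$'', write $r$ for that minimum. First I would establish a Menger-type duality: $r$ equals the largest size of a \emph{trek system} between $A$ and $B$ --- a set of treks with pairwise vertex-disjoint $A$-sides and pairwise vertex-disjoint $B$-sides --- and a maximum such system has $A$-endpoints $A'\subseteq A$ and $B$-endpoints $B'\subseteq B$ with $|A'|=|B'|=r$. This is purely combinatorial; I would obtain it by reducing to vertex-disjoint directed paths in an auxiliary ``split-and-doubled'' digraph on which ordinary Menger applies. Second, I would expand the $r\times r$ minor $\det(\Sigma_{A',B'})$: Cauchy--Binet applied to $\Sigma=(I-\Lambda)^{-1}\Phi(I-\Lambda)^{-\top}$, followed by the Lindstr\"{o}m--Gessel--Viennot lemma on each resulting factor, writes it as a signed sum of monomials indexed by trek systems with endpoints $A',B'$. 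Showing this polynomial is not identically zero --- so that $\operatorname{rank}(\Sigma_{A,B})\ge r$ off its zero locus, a dense open set --- then completes the proof, since intersecting with the everywhere-valid upper bound yields equality generically.

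The step I expect to be hardest is this last nonvanishing claim. Unlike in a single Lindstr\"{o}m--Gessel--Viennot determinant, a monomial here may carry an edge weight to the second power --- when an edge lies on both an $A$-side and a $B$-side of the system --- so distinct trek systems can in principle yield the same monomial, and one must rule out total cancellation, e.g.\ by singling out a trek system of minimal total edge length among the maximum ones and checking that its monomial is produced by no other configuration. By contrast, the Menger reduction is routine once the auxiliary graph is set up correctly, and the upper bound is a direct computation with the trek rule.
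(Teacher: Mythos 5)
This paper never proves \cref{prop:graphical_criterion_original_rank} itself---it is imported verbatim as Thm.~2.8 of \citet{sullivant2010trek}---so the relevant comparison is with that source, and your outline is essentially its proof: the trek rule, the factorization $\Sigma_{A,B}=H_{A,C}\,\Sigma_{C,B}+F_{A,D}\,G_{D,B}$ through a t-separating pair for the everywhere-valid upper bound, and Cauchy--Binet plus Lindstr\"om--Gessel--Viennot together with a Menger-type max-flow/min-cut duality on a split-and-doubled auxiliary digraph for the generic lower bound. Even the step you single out as hardest is handled in the original essentially as you propose---non-cancellation is argued via a trek system with no sided intersection of minimal total length whose monomial cannot be reproduced---so your plan matches the cited proof, with the only caveat that this step and the Menger reduction are left here as sketches rather than completed arguments.
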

Regarding the term \textit{generic}, equality in~\cref{prop:graphical_criterion_original_rank} holds for almost all parameter choices, except for a set with Lebesgue measure $0$ where coincidental lower ranks occur. Hereafter, when we say ``assuming genericity'', we exclude such coincidental cases, similar to how standard \textit{faithfulness} assumes no CIs other than those entailed by d-separations.\looseness=-1

Let us examine this criterion in the Tetrad example:
\begin{example}[t-separation in Tetrad structure]\label{example:1.2_tetrad_t_separation}
Continue on the graph shown in~\cref{fig:scatterplot_example}a. We have that $(\varnothing, \{L\})$ or $(\{L\}, \varnothing)$ t-separate $(\{X_1,X_2\}, \{X_3,X_4\})$, which explains the first $\operatorname{rank}=1$ $(=0+1)$ in~\cref{example:1.1_tetrad_low_ranks}. The other two low ranks can be explained in a similar way.
\end{example}

Rank constraints and its t-separation criterion directly generalize the CI constraints and its d-separation criterion, and thus offer more structural insights into latent variables than those given by FCI. We illustrate with the Tetrad example:\looseness=-1

\begin{example}[Rank constraints enables latent variable identification]\label{example:1.3_rank_enable_latent_causal_discovery}
Continue on the graph in~\cref{fig:scatterplot_example}a. Suppose that for some reason $L$ is latent, leaving only ${X_1, X_2, X_3, X_4}$ observed. Using CI constraints alone, algorithms like FCI cannot distinguish this model from an alternative fully connected graph with the same four observed variables, as no CIs exist in both models. However, with rank constraints, this alternative is falsified, as otherwise the three low ranks cannot be satisfied. Ranks reveal that though there is no conditional independence, the dependence among data is not arbitrary—it must stem in a single-dimensional way, suggesting the presence of latent variables.
\end{example}

Based on this insight, various latent variable causal discovery methods have been developed (reviewed in~\cref{app:related_work}). However, as noted, none of them address selection bias.\looseness=-1

\subsection{Ranks in Selection-Biased Data: an Example}
\label{subsec:extending_to_selection_motivation}
We now present an example to illustrate how the ranks of covariances may remain informative about the causal structure, even for the data under selection bias.

Let us first revisit the motivation behind the original rank constraints without selection bias, as introduced in~\cref{subsec:basics_on_rank_constraints}. The proof of the t-separation criterion consists of two key components: it first uses algebraic combinatorial tools to interpret covariance terms (which are polynomials of model parameters, as in~\cref{fig:scatterplot_example}a) as the ``flows of information'' in the graph, and then uses the max-flow–min-cut principle to count for the smallest size of nodes needed to ``choke'' all these flows. Simply put, when some dependence in the data cannot be fully explained (i.e., rendered conditionally independent), rank constraints can serve to quantify the ``dimensional bottleneck'' of how this dependence stems from.

The question then arises: can this ``dimensional bottleneck'' still be reflected in ranks, when the selection mechanism is ``low-dimensional''? We consider the following example.

\setcounter{example}{0} %
\renewcommand{\theexample}{2.\arabic{example}}
\begin{example}[Inverse Tetrad structure]\label{example:2.1_inverse_tetrad_structure}
    Consider the graph shown in~\cref{fig:scatterplot_example}b, which we call the ``inverse Tetrad structure''. We model a simple truncation selection mechanism, where the four observed variables $X_1,X_2,X_3,X_4$ are originally mutually independent, but then get selected by truncating on the value of a linear sum of them, represented by the unobserved ``response'' variable $Y$ in the graph.

    This simple truncation can introduce many complexities: as seen in the scatterplot, the remaining population no longer follows a linear Gaussian model, or even a linear structural equation model, that is, no independent exogenous noise terms can be found to generate the data. Instead, the data follows a truncated Gaussian distribution, where, unlike the closed-form polynomials in~\cref{fig:scatterplot_example}a, the covariances and higher moments are difficult to express. Hence, we show the covariance values in a specific simulation in the scatterplot.

    Yet, an interesting observation emerges from these values. Despite the complexities introduced by selection, the low-rank structure of covariances seems preserved. Specifically, they are the same as in the original Tetrad structure. Denote by $\Sigma'$ the covariance matrix of the biased data, we have:\vspace{-0.5em}
     \begin{align*}
        & \operatorname{rank}(\Sigma'_{\{X_1,X_2\}, \{X_3,X_4\}}) = 1, \\
        & \operatorname{rank}(\Sigma'_{\{X_1,X_3\}, \{X_2,X_4\}}) = 1, \\
        & \operatorname{rank}(\Sigma'_{\{X_1,X_4\}, \{X_2,X_3\}}) = 1,
    \end{align*}
    as can be verified from the values in the scatterplot (the first one is already highlighted in the dashed box).
\end{example}

\cref{example:2.1_inverse_tetrad_structure} strongly suggests that selections, just like latent variables, may leave identifiable traces about their ``dimensional bottleneck'' in the ranks of even biased data. Questions naturally arise: What if the selection becomes more complex (e.g., involving randomness, unlike truncation)? What if multiple selection mechanisms act simultaneously? When low ranks occur, can we differentiate whether it is from latent variables or selection bias? We formalize and aim to answer these questions in the next section.

\vspace{-0.8em}
\section{Generalized Rank Constraints}
\label{sec:model}

In this section, we formalize the generalized rank constraints for handling selection bias. In~\cref{subsec:linear_selection_mechanism}, we define the linear selection mechanism, a model both general and necessary for rank constraints to work. In~\cref{subsec:formal_generalized_rank_constraints}, we provide the graphical criterion for ranks in the biased covariances. Finally, in~\cref{subsec:discuss_identifiability_latent_or_selection}, we discuss the identifiability of distinguishing between latent variables and selection bias using rank constraints.

\subsection{Linear Selection Mechanisms}
\label{subsec:linear_selection_mechanism}
In this part, we introduce the linear selection mechanism. Similar to how original rank constraints generalize nonparametric CI constraints within linear Gaussian models, in this paper, we focus on a specific class of selections called linear selection mechanisms, defined below.

\begin{definition}[Linear selection mechanism]\label{def:linear_selection_mechanism}
    For a set of variables $X$, a linear selection mechanism is described by a configuration $\mathcal{S}$, which consists of tuples \(\{(V_i, \beta_i, \epsilon_i, \mathcal{Y}_i)\}_{i=1}^k\). Each tuple specifies a single selection condition, where:
    \begin{itemize}[noitemsep,topsep=0pt,left=0pt]
        \item \( V_i \subseteq X \) is the subset of variables from \( X \) directly involved in the \( i \)-th selection;
        \item \( \beta_i \in \mathbb{R}^{|V_i|}_{\neq 0} \) is a vector of nonzero linear coefficients that specifies how variables in \( V_i \) contribute to the selection;
        \item \( \epsilon_i \) is an independent noise term that models selection randomness. It may follow an arbitrary distribution, including non-Gaussian, or be degenerate to a constant;
        \item \( \mathcal{Y}_i \subsetneq \mathbb{R} \) is the set of admissible values, a proper subset of $\mathbb{R}$, which may consist of a single value, multiple values, an interval, or a union of intervals, etc.
    \end{itemize}
    For each single selection, we call \( Y_i = \beta_i^\top V_i + \epsilon_i \) as its response variable. Finally, a sample of \( X \) is included in the selected data if and only if \( Y_i \in \mathcal{Y}_i \) for all \( i = 1, \ldots, k \).
    
\end{definition}

The linear selection mechanism is versatile, allowing multiple selections to act simultaneously, as commonly seen in real world like multi-criteria admissions. Each single selection is also flexible, with various existing parametric models for selection bias fitting as specific instances:

\renewcommand{\theexample}{\arabic{example}}
\setcounter{example}{2}
\begin{example}[Existing parametric models fit in the linear selection mechanism]\label{example:existing_stat_fit_linear_selection}

Let us first explain what is modeled by a single linear selection. For a sample $X$, the probability for it to be selected in the $i$-th selection is:\looseness=-1

\begin{equation}\label{eq:select_probability}
\begin{aligned}
    P(Y_i \in \mathcal{Y}_i \mid X) 
    &= \int_{\mathcal{Y}_i - \beta_i^\top V_i} p_{\epsilon_i}(u) \, du.
\end{aligned}
\end{equation}
where $\mathcal{Y}_i - \beta_i^\top V_i = \{u\in\mathbb{R}: u + \beta_i^\top V_i \in \mathcal{Y}_i\}$. Then, with different choices of $\epsilon_i$ and $\mathcal{Y}_i$, many existing common selection models can fit as instances, including:
\begin{itemize}[noitemsep,topsep=0pt,left=0pt]
    \item $\epsilon_i = 0$ and $\mathcal{Y}_i = (a,b)$ reduce to a hard truncation model, as illustrated in~\cref{fig:scatterplot_example}b;
    \item $\epsilon_i \sim \operatorname{Logistic}(0,1)$ and $\mathcal{Y}_i = (a,\infty)$ reduce to a logistic selection model~\citep{dubin1989selection};
    \item $\epsilon_i \sim \mathcal{N}(0,1)$ and $\mathcal{Y}_i = (a,\infty)$ reduce to a probit selection model~\citep{heckman1977sample};
    \item $\epsilon_i \sim \mathcal{N}(0,1)$ and $\mathcal{Y}_i = \{a\}$ reduce to a stabilizing selection model~\citep{lande1983measurement}.
\end{itemize}
\vspace{-1.7em}
\end{example}

\newpage
Having defined the linear selection mechanism and demonstrated its connection to existing models, we now turn to the graphical criterion of rank constraints under selection.

\newcommand{\tikzGone}{
\begin{tikzpicture}[->,>=stealth,shorten >=1pt,auto,semithick,inner sep=0.2mm,scale=0.7]
\node[draw, rectangle, minimum width=1cm, minimum height=0.45cm, align=center, fill=gray!20] (L) at (0.2, 3) {$L$};
\node[draw, rectangle, minimum width=1cm, minimum height=0.45cm, align=center, fill=gray!20] (R) at (2.4, 3) {$R$};
\node[draw, rectangle, minimum width=0.5cm, minimum height=0.45cm, align=center, fill=gray!20] (C) at (1.3, 1.8) {$C$};
\node[draw, ellipse,  minimum width=1.6cm, minimum height=0.5cm, align=center] (A) at (0, 0.3) {$A$};
\node[draw, ellipse,  minimum width=1.6cm, minimum height=0.5cm, align=center] (B) at (2.6, 0.3) {$B$};
\draw [->] (L) to (C);
\draw [->] (R) to (C);
\draw [->] (L) to (A);
\draw [->] (R) to (B);
\draw [->] (C) to (A);
\draw [->] (C) to (B);
\end{tikzpicture}
}

\newcommand{\tikzGtwo}{
\begin{tikzpicture}[->,>=stealth,shorten >=1pt,auto,semithick,inner sep=0.2mm,scale=0.7]
\node[draw, rectangle, minimum width=1cm, minimum height=0.45cm, align=center, fill=gray!20] (L) at (0.2, 3) {$L$};
\node[draw, rectangle, minimum width=1cm, minimum height=0.45cm, align=center, fill=gray!20] (R) at (2.4, 3) {$R$};
\node[draw, rectangle, minimum width=0.5cm, minimum height=0.45cm, align=center, fill=gray!20] (C) at (1.3, 2.05) {$C$};
\node[draw, ellipse, minimum width=0.5cm, minimum height=0.5cm,  align=center, double, pattern=north east lines, pattern color=gray,  double distance=0.4mm] (D) at (1.3, 0.9) {$D$};
\node[draw, ellipse,  minimum width=1.6cm, minimum height=0.5cm, align=center] (A) at (0, 0.3) {$A$};
\node[draw, ellipse,  minimum width=1.6cm, minimum height=0.5cm, align=center] (B) at (2.6, 0.3) {$B$};
\draw [->] (L) to (C);
\draw [->] (R) to (C);
\draw [->] (L) to (A);
\draw [->] (R) to (B);
\draw [->] (C) to (A);
\draw [->] (C) to (B);
\draw [->] (L) to (D);
\draw [->] (R) to (D);
\end{tikzpicture}
}

\newcommand{\tikzGthree}{
\begin{tikzpicture}[->,>=stealth,shorten >=1pt,auto,semithick,inner sep=0.2mm,scale=0.7]
\node[draw, ellipse, minimum width=1cm, minimum height=0.45cm, align=center, double, pattern=north east lines, pattern color=gray,  double distance=0.4mm] (L) at (0.2, 0.3) {$L$};
\node[draw, ellipse, minimum width=1cm, minimum height=0.45cm, align=center, double, pattern=north east lines, pattern color=gray,  double distance=0.4mm] (R) at (2.4, 0.3) {$R$};
\node[draw, rectangle, minimum width=0.5cm, minimum height=0.45cm, align=center, fill=gray!20] (C) at (1.3, 1.3) {$C$};
\node[draw, ellipse,  minimum width=1.6cm, minimum height=0.5cm, align=center] (A) at (0, 3) {$A$};
\node[draw, ellipse,  minimum width=1.6cm, minimum height=0.5cm, align=center] (B) at (2.6, 3) {$B$};
\draw [->] (A) to (L);
\draw [->] (A) to (C);
\draw [->] (B) to (R);
\draw [->] (B) to (C);
\draw [->] (C) to (L);
\draw [->] (C) to (R);
\end{tikzpicture}
}

\newcommand{\tikzGfour}{
\begin{tikzpicture}[->,>=stealth,shorten >=1pt,auto,semithick,inner sep=0.2mm,scale=0.7]
\node[draw, ellipse, minimum width=1cm, minimum height=0.45cm, align=center, double, pattern=north east lines, pattern color=gray,  double distance=0.4mm] (L) at (0.2, 0.3) {$L$};
\node[draw, ellipse, minimum width=1cm, minimum height=0.45cm, align=center, double, pattern=north east lines, pattern color=gray,  double distance=0.4mm] (R) at (2.4, 0.3) {$R$};
\node[draw, ellipse, minimum width=0.5cm, minimum height=0.5cm,  align=center, double, pattern=north east lines, pattern color=gray,  double distance=0.4mm] (C) at (1.3, 1.3) {$C$};
\node[draw, ellipse,  minimum width=1.6cm, minimum height=0.5cm, align=center] (A) at (0, 3) {$A$};
\node[draw, ellipse,  minimum width=1.6cm, minimum height=0.5cm, align=center] (B) at (2.6, 3) {$B$};
\draw [->] (A) to (L);
\draw [->] (A) to (C);
\draw [->] (B) to (R);
\draw [->] (B) to (C);
\end{tikzpicture}
}

\begin{table*}[t]
\centering
\caption{The spider example and its variants with selection bias. In the selection-augmented graphs, each node represents a group of non-adjacent variables (e.g., \(C\)), with the lowercase letter indicating its cardinality (e.g., \(c = |C|\)). Edges between nodes represent fully connected directed edges from every variable in one group to every variable in another. We assume \(a, b \gg l, r > c, d\). The table presents the rank values (upper rows) and corresponding t-separation sets (lower rows) among observed variables in \(A, B\), derived from~\cref{thm:graphical_criterion_rank_constraints}. Here, \(A_1, A_2\) denote large enough disjoint subsets of \(A\), and \(B_1, B_2\) similarly. For brevity, unions like \(A_1 \cup B_1\) are written as \(A_1B_1\). For example, in the second graph, the rank of the covariance matrix between \(A\) and \(B\) is \(2c + d\), with \( (CD, CD) \) t-separating \((AD, BD)\).\vspace{0.5em}}
\resizebox{\textwidth}{!}{%
\label{table:spider_examples}
\begin{tabular}{ccccc}
\toprule
\shortstack{Graphs\\and\\covariance\\submatrices\vspace{1.5em}} & \tikzGone  & \tikzGtwo & \tikzGthree & \tikzGfour \\\midrule
\multirow{2}{*}{$A_1,A_2$} &  
$l+c$  & $l+c$ &  $l+c$ &  $l+c$   \\
&  $(\varnothing, LC)$  & $(\varnothing, LCD)$ &  $(LR, LC)$ &  $(LRC, LC)$   \\
\midrule
\multirow{2}{*}{$B_1,B_2$} &
    $r+c$    & $r+c$ & $r+c$  &   $r+c$   \\
&   $(\varnothing, RC)$  & $(\varnothing, RCD)$ &  $(LR, RC)$ &  $(LRC, RC)$    \\
\midrule
\multirow{2}{*}{$A,B$} & 
     $2c$   & $2c+d$ & $2c$  &   $c$   \\
   &  $(C,C)$  & $(CD,CD)$ & $(RC,LC)$  &   $(RC,LC)$   \\
\midrule
\multirow{2}{*}{\shortstack{$A_1 B_1,A_2 B_2$}} & 
   $l+r+c$     & $l+r+c$ & $l+r$  &   $l+r+c$   \\
&    $(LC, R)$    & $(LC, RD)$ &  $(LR, LR)$ &    $(LRC, LRC)$  \\

\bottomrule
\end{tabular}
}
\end{table*}

\subsection{Graphical Criterion of Ranks under Selection}
\label{subsec:formal_generalized_rank_constraints}

We are now ready to provide the graphical characterization of the covariance ranks in selection-biased data.

Since selection can also be viewed as a causal process in the data generating procedure, we incorporate it into the causal graph. This leads to the definition of the selection-augmented graph, following~\citet{bareinboim2012controlling}:

\begin{definition}[Selection-augmented graph]\label{def:selection_augmented_graph}
    Consider a DAG $\Gcal$ with nodes $X$ that represents the original data generating process for $X$, and a linear selection configuration $\mathcal{S}=\{(V_i, \beta_i, \epsilon_i, \mathcal{Y}_i)\}_{i=1}^k$ with $k$ single selection mechanisms. The selection-augmented graph is a new DAG, denoted $\Gcal^{(\mathcal{S})}$, obtained by augmenting $\Gcal$ with the following:
    \begin{itemize}[noitemsep,topsep=0pt,left=0pt]
    \item Additional selection response nodes $Y=\{Y_i\}_{i=1}^k$, and
    \item Additional edges $\{X_j \rightarrow  Y_i: \forall i = 1, \cdots, k, X_j \in V_i\}$.
\end{itemize}
The post-selection data $X$ can then be viewed as generated from $\Gcal^{(\mathcal{S})}$, with each response $Y_i$ restricted within its admissible values $\mathcal{Y}_i$. When there is no selection, i.e., $\mathcal{S} = \varnothing$, the $\Gcal^{(\varnothing)}$ reduces to the original $\Gcal$.
\end{definition}

Note that, unlike conventional notation, we use the letter ``$Y$'' instead of ``$S$'' to denote selection variables. This is because, conventionally, selection refers to conditioning on a single value (often Boolean in nonparametric settings) of a variable, whereas here we allow $Y_i$ to take multiple values.

With the selection-augmented graph, we now provide the graphical criterion for generalized rank constraints:\looseness=-1

\begin{restatable}[Graphical criterion for generalized rank constraints]{theorem}{THMGRAPHICALCRITERIONRANKCONSTRAINTS}\label{thm:graphical_criterion_rank_constraints}
Let $\Gcal$ be a DAG and $X$ the variables generated from $\Gcal$ with a linear Gaussian model as specified in~\cref{eq:linear_sem}. Suppose \(X\) then undergoes linear selections specified by \(\mathcal{S} = \{(V_i, \beta_i, \epsilon_i, \mathcal{Y}_i)\}_{i=1}^k\) with \(k\) single selections. Let \(\Sigma^{(\mathcal{S})}\) be the population covariance matrix of \(X\) after selection. For any two subsets \(A, B \subset X\) which need not be disjoint, assuming genericity, we have:
\begin{flalign*}
& \operatorname{rank}(\Sigma^{\textcolor{black}{(\mathcal{S})}}_{A,B}) = \min \{|C| + |D| : C, D \subset X \textcolor{black}{\cup Y},\\
& \quad\quad\quad (C, D) \text{ t-separates } (A \textcolor{black}{\cup Y}, B\textcolor{black}{ \cup Y}) \text{ in } \Gcal^{\textcolor{black}{(\mathcal{S})}}\} \textcolor{black}{ - k},
\end{flalign*}
where $Y$ denotes the additional selection response variables introduced to the selection-augmented graph $\Gcal^{(\mathcal{S})}$.

\end{restatable}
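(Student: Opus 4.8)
The plan is to convert the biased covariance $\Sigma^{(\mathcal S)}$ — which has no usable closed form as a function of the model parameters — into an \emph{ordinary} linear Gaussian covariance on an augmented graph, and then apply \cref{prop:graphical_criterion_original_rank}.

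\textbf{A workable form for $\Sigma^{(\mathcal S)}$.} Although $X$ after selection is neither Gaussian nor generated by any SEM, I would avoid conditioning on the full responses $Y=(Y_i)$ (whose noise $\epsilon_i$ may be non-Gaussian) and instead condition on the noiseless parts $U_i\coloneqq\beta_i^\top V_i$. Since each $U_i$ is a linear function of the Gaussian $X$, the pair $(X,U)$ is jointly Gaussian; write the regression $X=\tilde M U+\xi$ with $\tilde M=\Sigma_{X,U}\Sigma_{U,U}^{-1}$, $\xi\indep U$, $\xi\sim\mathcal N(0,\Theta)$, $\Theta=\Sigma_{X}-\Sigma_{X,U}\Sigma_{U,U}^{-1}\Sigma_{U,X}$. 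The selection event $\{Y_i\in\mathcal Y_i\}_i$ is a function of $(U,\epsilon)$ only, while $\xi$, being a function of the exogenous $E$ alone, is independent of $(U,\epsilon)$; hence $\xi$ keeps its pre-selection law under selection, and the law of total covariance yields
\[
\Sigma^{(\mathcal S)}_{X}=\Theta+\tilde M\,\Psi_U\,\tilde M^\top=\Sigma_{X}+\tilde M\,\Omega\,\tilde M^\top,\qquad \Psi_U\coloneqq\operatorname{Cov}(U\mid\mathrm{sel}),\ \ \Omega\coloneqq\Psi_U-\Sigma_{U,U}.
\]
The exact entries of $\Omega$ involve truncated-moment integrals and remain unexpressed; all that is needed is that, under genericity of the selection, $\operatorname{rank}\Omega=k$ (a genuinely non-degenerate selection strictly perturbs the covariance of its responses).

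\textbf{Reinterpretation and the combinatorial core.} I would next show that $\Sigma^{(\mathcal S)}_{X}$ is \emph{exactly} the $X$-marginal covariance of a linear Gaussian model on a graph $\Hcal$ built from $\Gcal$ by adjoining a rank-$k$ latent source — i.e.\ $k$ latents $L_1,\dots,L_k$ with joint covariance $\Omega$ — whose effect on $X$ is encoded by $\alpha^\top=(I-\Lambda)\tilde M$ (so that $L_i$ feeds into $\anc_{\Gcal}(\bigcup_j V_j)$ with those weights). \cref{prop:graphical_criterion_original_rank} applied to $\Hcal$ then gives, generically, $\operatorname{rank}(\Sigma^{(\mathcal S)}_{A,B})=\min\{|C|+|D| : (C,D)\text{ t-separates }(A,B)\text{ in }\Hcal\}$, so the theorem reduces to the graph-theoretic identity that this number equals $\mu-k$, where $\mu\coloneqq\min\{|C|+|D| : (C,D)\text{ t-separates }(A\cup Y,B\cup Y)\text{ in }\Gcal^{(\mathcal S)}\}$. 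One direction starts from the observation that a t-separator of $(A\cup Y,B\cup Y)$ must place each $Y_i$ into $C\cup D$ (it is its own ancestor on both sides and has no descendants), costing at least $k$; the converse encodes a duality between \emph{conditioning on the $k$ collider responses} of $\Gcal^{(\mathcal S)}$ and \emph{marginalizing the $k$ latent confounders} of $\Hcal$.

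\textbf{Where the difficulty lies.} I expect the reinterpretation-plus-min-cut step to be the main obstacle. A bare-hands estimate only yields $\operatorname{rank}(\Sigma^{(\mathcal S)}_{A,B})\le\operatorname{rank}(\Theta_{A,B})+k\le\mu$, overshooting the claimed bound by exactly $k$; moreover $\operatorname{rank}(\Theta_{A,B})$ can itself be strictly below $\mu-k$, so the rank-$\le k$ perturbation $\tilde M\Omega\tilde M^\top$ must be shown to land the rank at \emph{exactly} $\mu-k$ — neither less (needing genericity) nor more (needing the structural alignment captured by $\Hcal$, or equivalently a trek-flow routing argument on $\Gcal^{(\mathcal S)}$ showing that every trek feeding an entry of $\Sigma^{(\mathcal S)}_{A,B}$ can be routed through a fixed set of $\mu-k$ nodes). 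The remaining genericity bookkeeping — verifying that the induced parameters of the $\Hcal$-model avoid the Lebesgue-null exceptional set of \cref{prop:graphical_criterion_original_rank}, for which it suffices to exhibit one non-degenerate instance, e.g.\ $\epsilon_i\sim\mathcal N$ where $\Sigma^{(\mathcal S)}$ is literally that model's covariance — is then routine; the hypothesis ``assuming genericity'' is understood to cover both the faithfulness-type exclusions on $(\Lambda,\Phi,\beta)$ and the non-degeneracy of the selection that ensures $\operatorname{rank}\Omega=k$.
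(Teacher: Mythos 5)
Your opening decomposition is correct and even elegant: conditioning on the noiseless parts $U_i=\beta_i^\top V_i$, noting the selection event is a function of $(U,\epsilon)$ while the regression residual $\xi$ is independent of $(U,\epsilon)$, does give $\Sigma^{(\mathcal S)}_{X}=\Sigma_{X}+\tilde M\,\Omega\,\tilde M^\top$ with $\Omega=\operatorname{Cov}(U\mid \mathrm{sel})-\Sigma_{U,U}$, and it handles non-Gaussian $\epsilon_i$ more cleanly than conditioning on $Y$ directly. The genuine gap is the next step: re-embedding this as a linear model $\Hcal$ with $k$ marginalized latent sources attached to $\anc_\Gcal(V_i)$ and invoking \cref{prop:graphical_criterion_original_rank} ``generically.'' The induced parameters $(\tilde M,\Omega)$ are not free parameters of $\Hcal$; they lie on a proper subvariety determined by the original model, and on that entire subvariety the rank can sit strictly below the $\Hcal$-min-cut. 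The third graph of \cref{table:spider_examples} is a concrete counterexample: there $A\rightarrow C\leftarrow B$ with $C$ latent, and the selections $L$ (parents in $A\cup C$) and $R$ (parents in $B\cup C$), so $k=l+r$ and $\anc_\Gcal(V_i)=A\cup B\cup C$ for every selection; your $\Hcal$ therefore attaches all $l+r$ latents directly to $A$ and $B$, giving $\min\text{-cut}_{\Hcal}(A,B)=l+r$ (the node set $C$ cannot cut edges that jump straight from a latent into $A$ or $B$), whereas the true generic post-selection rank is $\mu-k=(l+r+2c)-(l+r)=2c<l+r$ under the table's assumption $l,r>c$. The reason is structural information that the support pattern of $\Hcal$ forgets: every trek from $A$ to the $R$-responses must pass through the $c$-dimensional bottleneck $C$, which manifests as $\operatorname{rank}(\Sigma_{A,U_R})\le c$ (since $\operatorname{Cov}(A,B)=0$ and $\operatorname{Cov}(A,C)$ has rank at most $c$), so $\tilde M$ is far from generic for $\Hcal$. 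Consequently both halves of your plan fail: the proposed combinatorial identity $\min\text{-cut}_{\Hcal}(A,B)=\mu-k$ is simply false here, and the genericity bookkeeping cannot be repaired by ``exhibiting one non-degenerate instance,'' because the discrepancy holds identically on the constrained parameter set rather than on a Lebesgue-null exceptional set. (A smaller issue, fixable but worth noting: $\Omega$ is typically negative semidefinite for variance-shrinking selections, so ``a linear Gaussian model on $\Hcal$'' with latent covariance $\Omega$ does not literally exist; one would have to argue purely algebraically.)

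The paper avoids this trap by never converting the conditioning into marginalized confounders. It proves the algebraic identity $\operatorname{rank}(\Sigma^{(\mathcal S)}_{A,B})=\operatorname{rank}(\Sigma_{A\cup Y,B\cup Y})-|Y|$ directly: first the exact Schur-complement block-rank identity for pointwise Gaussian selection, then the law of total covariance showing the multi-valued correction $P_A\operatorname{Cov}(Y\mid Y\in\mathcal Y)P_B^\top$ does not alter that rank generically, then the extension to non-Gaussian $\epsilon$; \cref{prop:graphical_criterion_original_rank} is then applied once, to the augmented graph $\Gcal^{(\mathcal S)}$ itself with the sets $A\cup Y$ and $B\cup Y$, where the fact that all selection-induced dependence must be routed through the $Y_i$ via their parents $V_i$ (and hence through bottlenecks such as $C$ above) is represented faithfully. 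If you want to salvage your route, keep $Y$ (or $U$) in the graph as conditioned sink nodes and relate ranks before and after conditioning, rather than replacing them by source latents; once they are turned into confounders pointing at ancestor sets, the min-cut computation is carried out in the wrong graph.
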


The proof of~\cref{thm:graphical_criterion_rank_constraints} is provided in~\cref{app:proofs}. Note that when there is no selection (i.e., $\mathcal{S}=\varnothing$ and $k=0$), the theorem reduces to~\cref{prop:graphical_criterion_original_rank}. That is, our graphical criterion generalizes the original rank constraints to accommodate selection bias, hence the name ``generalized rank constraints''. This bridges the gap beyond basic CI constraints to handle selection bias, as illustrated in~\cref{fig:venn_our_contribution}.

Let us examine this criterion in the inverse Tetrad example:

\setcounter{example}{1} %
\renewcommand{\theexample}{2.\arabic{example}}
\begin{example}[t-separation in inverse Tetrad structure]\label{example:2.2_inverse_tetrad_t_separation}
Let us revisit the model in~\cref{example:2.1_inverse_tetrad_structure}. The corresponding selection-augmented graph is shown in~\cref{fig:scatterplot_example}b. In it, we have that $(\{Y\}, \{Y\})$ t-separates $(\{X_1,X_2,Y\}, \{X_3,X_4,Y\})$, which explains the first $\operatorname{rank}=1$ $(=1+1-1)$ in~\cref{example:2.1_inverse_tetrad_structure}. The other two low ranks can be explained in a similar way.
\end{example}

\vspace{-0.5em}

Generalized rank constraints reveal that when dependence in the data cannot be fully conditioned out, the ``dimensional bottleneck'' of how this dependence stems from–be it latent variables or selection bias–can leave traces in ranks of covariance matrix of even biased data. This provides a powerful graphical tool for identifying causal structures that involve both latent variables and selection bias.

Before applying this tool for latent variable causal discovery, a question arises. Recall the Tetrad and inverse Tetrad structures in~\cref{example:1.1_tetrad_low_ranks,example:2.1_inverse_tetrad_structure}. Though one has latent variables and the other has selection bias, their rank constraints among four observed variables are identical. That is, while ranks reveal a single-dimensional bottleneck in the observed dependence, they cannot distinguish whether this arises from latent variables or selection bias. Then, does every selection structure have an alternative structure—free of selection but may involving latent variables—that is equivalent in ranks? We explore this in the next part.

\subsection{Identifiability between Latent Variables and Selection Bias: Examples}
\label{subsec:discuss_identifiability_latent_or_selection}
Now, we discuss the possibility of distinguishing between

latent variables and selection bias, using rank constraints from data that may or may not be selected.

For two selection-augmented DAGs over the same observed variables $X$, but different and possibly empty latent variables and selection response variables, we say they are ``rank equivalent'' if their covariance submatrices for any $A,B\subset X$ have identical ranks. This parallels the concept of ``CI equivalence'', where, consider for example two measured variables, whether they are confounded (\(X_1 \leftarrow L \rightarrow X_2\)), directly causal related (\(X_1 \rightarrow X_2\)), or selected (\(X_1 \rightarrow Y \leftarrow X_2\)) are indistinguishable by CI, as all three have \(X_1 \not\indep X_2\). We then answer the following question: for every graph with selection, is there always an alternative graph–free of selection but may involving latent variables—that is rank equivalent to it, and vice versa?

Interestingly, the answer is no, contrary to the intuition suggested by~\cref{example:1.1_tetrad_low_ranks,example:2.1_inverse_tetrad_structure}. We first note that latent variables and selection bias can sometimes be distinguished using only CI constraints, as shown below.

\renewcommand{\theexample}{\arabic{example}}
\setcounter{example}{3}
\begin{example}[CI constraints to distinguish between latent and selection variables]\label{example:ci_constraints_to_distinguish}
    Consider the following two graphs.
\begin{center}
\begin{minipage}[h]{0.2\textwidth}
\begin{center}
\begin{tikzpicture}[->,>=stealth,shorten >=1pt,auto,semithick,inner sep=0.2mm,scale=0.7]
\node[draw, rectangle, minimum size=0.45cm, align=center, fill=gray!20] (L) at (0, 1) {$L$};
\node[draw, circle,  minimum size=0.4cm, align=center] (X1) at (-1.8, 1) {$X_1$};
\node[draw, circle,  minimum size=0.4cm, align=center] (X3) at (-0.9, 0) {$X_3$};
\node[draw, circle,  minimum size=0.4cm, align=center] (X4) at (0.9, 0) {$X_4$};
\node[draw, circle,  minimum size=0.4cm, align=center] (X2) at (1.8, 1) {$X_2$};
\draw [->] (X1) to (X3);
\draw [->] (L) to (X3);
\draw [->] (L) to (X4);
\draw [->] (X2) to (X4);
\end{tikzpicture}
\end{center}
\end{minipage}
\noindent\quad
\begin{minipage}[h]{0.2\textwidth}
\begin{center}
\begin{tikzpicture}[->,>=stealth,shorten >=1pt,auto,semithick,inner sep=0.2mm,scale=0.7]
\node[draw, circle, minimum size=0.4cm, align=center, double, pattern=north east lines, pattern color=gray,  double distance=0.4mm] (Y1) at (-0.5, 0.5) {$Y_1$};
\node[draw, circle, minimum size=0.4cm, align=center, double, pattern=north east lines, pattern color=gray,  double distance=0.4mm] (Y2) at (0.5, 0.5) {$Y_2$};
\node[draw, circle,  minimum size=0.4cm, align=center] (X1) at (-1.8, 1) {$X_1$};
\node[draw, circle,  minimum size=0.4cm, align=center] (X3) at (-1.8, 0) {$X_3$};
\node[draw, circle,  minimum size=0.4cm, align=center] (X4) at (1.8, 0) {$X_4$};
\node[draw, circle,  minimum size=0.4cm, align=center] (X2) at (1.8, 1) {$X_2$};
\draw [->] (X1) to[bend left=16] (X2);
\draw [->] (X1) to (Y1);
\draw [->] (X3) to (Y1);
\draw [->] (X2) to (Y2);
\draw [->] (X4) to (Y2);
\draw [->] (X3) to[bend left=-16] (X4);
\end{tikzpicture}
\end{center}
\end{minipage}
\end{center}
In the first one, with $X_1 \indep X_4$, $X_2 \indep X_3$, $X_1 \not\indep X_4 | X_3$, and $X_2 \not\indep X_3 | X_4$, it can be concluded that latent variables must exist between $X_3$ and $X_4$. In the second one, with the only two CIs among observed variables being $X_1 \indep X_4 | X_2, X_3$ and $X_2 \indep X_3 | X_1, X_4$, it follows that selection bias involving all four variables must be present.
\end{example}
These examples are originally introduced as illustrations for the FCI algorithm~\citep{zhang2008completeness}. The derivation uses v-structure orientation rules, omitted here for brevity.

We then note that beyond CI, rank constraints can sometimes also distinguish between latent variables and selection bias, even in CI equivalent graphs, as shown below.

\begin{example}[Rank constraints to distinguish between latent and selection variables]\label{example:rank_constraints_to_distinguish}
    Consider the graphs in~\cref{table:spider_examples}. The first column shows the original ``spider'' structure in~\citep{sullivant2010trek}, and the other three are its variants with selection bias, as detailed in the table caption. In these graphs, only \(A\) and \(B\) are observed, with no CIs among them. However, low ranks exist and differ across graphs.

    In the first column, \(A\) and \(B\) share latent variables \(C\), yet the rank between them is $2|C|$ instead of $|C|$, a property unique to the original spider structure (up to indeterminacies inside groups). In contrast, in the second and third columns, while other ranks remain unchanged, either the rank increases between $A,B$ or decreases between $A_1B_1,A_2B_2$, which can not be achieved by any graph without selection.
\end{example}

Beyond these examples, we have to note that the completeness in distinguishing latent variables from selection bias requires characterizing the rank equivalence class, analogous to maximal ancestral graphs (MAGs) for CI constraints~\citep{richardson2002ancestral}. This remains an open challenge and is beyond the scope of this paper.

Now, having introduced generalized rank constraints and its graphical criterion, let us apply it to latent causal discovery with selection bias in a specific model class.

\section{One-factor Model under Selection Bias\looseness=-1}
\label{sec:approach}
In this section, we illustrate how the generalized rank constraints helps identify latent causal structure under selection bias. For a case study, we consider the one-factor model.

The one-factor model from~\citet{silva2003learning} captures cases where latent variables are indirectly measured, such as questionnaires. Since selection bias is also common in such data (e.g., personal traits influencing survey participation), we extend the model to incorporate it, as defined below.\looseness=-1

\begin{definition}[One-factor model with selection bias]\label{def:one_factor_with_selection}
    Let \(\Gcal\) be a DAG generating latent variables \(L = \{L_1, \ldots, L_m\}\). Each latent variable \(L_i\) has measurements $\mathbf{X}_i$ (i.e., children that has no other parents than \(L_i\)) with $|\mathbf{X}_i|\geq 2$. The data are further subject to a possible selection \(\mathcal{S} = \{(V_j, \beta_j, \epsilon_j, \mathcal{Y}_j)\}_{j=1}^k\) to latent variables, i.e., $\bigcup_{j=1}^k{V_j} \subset L$.
\end{definition}

Using the graphical criterion in~\cref{thm:graphical_criterion_rank_constraints}, we show that CI among \(L\), even unobserved and biased, can still be recovered from the rank constraints of the observed data:

\begin{proposition}[Ranks recover CIs among \(L\) under selection bias]\label{prop:rank_in_one_factor}
    For any disjoint subsets \(A, B, C \subset L\), \(A \indep B \mid C\) holds in the selection-biased data, if and only if the rank of the population covariance matrix between \(\mathbf{X}_A \cup \mathbf{X}_C^{(1)}\) and \(\mathbf{X}_B \cup \mathbf{X}_C^{(2)}\) is \(|C|\). Here, \(\mathbf{X}_C^{(1)}\) and \(\mathbf{X}_C^{(2)}\) are disjoint partitions of \(X_C\) such that \(|\mathbf{X}_C^{(1)}|, |\mathbf{X}_C^{(2)}| \geq |C|\).
\end{proposition}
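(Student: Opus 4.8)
The strategy is to apply the graphical criterion of \cref{thm:graphical_criterion_rank_constraints} to the selection-augmented graph $\Gcal^{(\mathcal{S})}$ of the one-factor model, and to relate the minimal t-separating sets for $(\mathbf{X}_A \cup \mathbf{X}_C^{(1)} \cup Y,\, \mathbf{X}_B \cup \mathbf{X}_C^{(2)} \cup Y)$ directly to d-separating sets for $(A, B \mid C)$ among the latent variables in $\Gcal$. First I would set up notation: write $Y = \{Y_1,\dots,Y_k\}$ for the selection response nodes, each with parents in $L$ only (by \cref{def:one_factor_with_selection}); write $n = |\mathbf{X}_A \cup \mathbf{X}_C^{(1)} \cup Y|$ etc., and observe that every measurement $X \in \mathbf{X}_i$ is a pure child of $L_i$ with no other parents, so $\anc_{\Gcal^{(\mathcal{S})}}(X) = \anc_\Gcal(L_i) \cup \{X\}$, and $X$ has no proper descendants. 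By \cref{thm:graphical_criterion_rank_constraints} the rank in question equals $\min\{|C'|+|D'|\}$ over pairs $(C', D')$ of subsets of $X \cup Y$ that t-separate the two augmented sets, minus $k$.

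The key step is a two-way reduction. For the ``if'' direction — showing the rank being $|C|$ implies $A \indep B \mid C$ — I would argue contrapositively: if $A \not\indep B \mid C$ in $\Gcal$, there is an active path between some $L_a \in A$ and $L_b \in B$ given $C$; I would lift this to show that any candidate t-separating pair $(C',D')$ must have $|C'|+|D'| \geq |C| + k + 1$, because one needs at least $k$ nodes to ``choke'' the $k$ shared response ancestors $Y$ appearing on both sides (these are shared ancestors of the two augmented sets, regardless of $A,B,C$), at least $|C|$ further nodes to choke the $|C|$ latent-directions routed through the measurements of $C$ (this is where $|\mathbf{X}_C^{(1)}|, |\mathbf{X}_C^{(2)}| \geq |C|$ is used — there are $|C|$ pure measurement-to-measurement flows through each $L_c$, plus surplus measurements on each side so no single node blocks two at once), and one additional node for the active $L_a$–$L_b$ path which is not choked by removing $C$ (from either side). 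For the ``only if'' direction, when $A \indep B \mid C$ holds, I would exhibit an explicit choking pair achieving $|C| + k$: take $C' = (\text{d-separating latent set for } A,B \text{ given } C, \text{ pushed to include } C) \cup Y$ on one side, or more carefully split the latents $C$ and the responses $Y$ between $C'$ and $D'$ so that after removing $C'$ on the $A$-side and $D'$ on the $B$-side, the ancestor sets become disjoint — here the fact that $Y$'s parents lie in $L$ (so $Y$ can be severed by removing latent ancestors shared between sides) and that $C$ d-separates $A$ from $B$ guarantees disjoint residual ancestors; counting gives $|C'|+|D'| = |C| + k$, so the rank is $|C|+k-k = |C|$.

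The main obstacle I anticipate is the careful bookkeeping in the lower bound, specifically ruling out ``clever'' t-separating pairs that exploit overlap between $Y$, $C$, and the active $A$–$B$ path to choke several flows with fewer nodes than $|C|+k+1$. The delicate point is that a single node of $X \cup Y$ could in principle lie on the intersection of the $Y$-flows, a $C$-flow, and the $L_a$–$L_b$ flow; I need the structural fact that the $k$ response flows are ``parallel'' (vertex-disjoint except possibly at latent ancestors), that the $|C|$ flows through each $L_c$'s measurements are genuinely $|C|$-dimensional (Menger/max-flow–min-cut on the measurement bipartite structure, using $|\mathbf{X}_C^{(1)}|, |\mathbf{X}_C^{(2)}| \geq |C|$), and that an active path surviving the removal of $C$ contributes a genuinely extra dimension. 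I would handle this by invoking the max-flow–min-cut interpretation underlying \cref{prop:graphical_criterion_original_rank}/\cref{thm:graphical_criterion_rank_constraints} directly: exhibit $|C|+k+1$ vertex-disjoint treks (in the augmented graph, from the $A$-side set to the $B$-side set) whenever $A\not\indep B\mid C$, which forces every t-separating pair to have size $\geq |C|+k+1$, hence rank $\geq |C|+1$; combined with the matching construction in the ``only if'' direction this pins the rank at exactly $|C|$ iff $A \indep B \mid C$. Finally I would note the genericity caveat is inherited verbatim from \cref{thm:graphical_criterion_rank_constraints}.
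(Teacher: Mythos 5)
The paper itself gives no worked-out proof of this proposition: it records it as a ``direct consequence'' of \cref{thm:graphical_criterion_rank_constraints}, the implicit argument being algebraic rather than combinatorial. Because the measurements are pure children and the selection acts only on $L$, the biased cross-covariance between $\mathbf{X}_A\cup\mathbf{X}_C^{(1)}$ and $\mathbf{X}_B\cup\mathbf{X}_C^{(2)}$ factors through the biased latent covariance $\Sigma^{(\mathcal{S})}_{A\cup C,\,B\cup C}$ with generically full-column-rank loading matrices, so its rank equals $\operatorname{rank}\bigl(\Sigma_{A\cup C\cup Y,\,B\cup C\cup Y}\bigr)-k$ by the rank identity established in the proof of \cref{thm:graphical_criterion_rank_constraints}; by \cref{prop:ci_as_low_ranks} this equals $|C|$ exactly when $A\indep B\mid C\cup Y$ holds in the pre-selection Gaussian model, which is precisely the CI ``in the selection-biased data.'' Your plan starts from the same theorem but then commits to a t-separation counting argument, and that is where a genuine gap appears.

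Concretely, two things go wrong. First, you test the wrong independence: the proposition concerns CI in the \emph{biased} data, i.e.\ d-separation of $A$ and $B$ given $C\cup Y$ in $\Gcal^{(\mathcal{S})}$, not d-separation given $C$ in $\Gcal$. Your contrapositive step (``if $A\not\indep B\mid C$ in $\Gcal$, there is an active path \ldots given $C$'') therefore misses exactly the case the paper is built around: dependence \emph{created} by selection. Take two independent latents $L_a,L_b$, each with two pure measurements, and one selection response $Y$ with parents $\{L_a,L_b\}$, $C=\varnothing$, $k=1$. Here the minimal t-separating pair for the augmented sets is $(\{Y\},\{Y\})$, so the rank is $2-1=1>|C|$ and the CI fails in the biased data, yet there is no active $L_a$--$L_b$ path given $C$, so your lower-bound argument produces nothing; symmetrically, your ``only if'' construction would have to exhibit a choking pair of size $|C|+k=1$, which does not exist, showing the statement you are actually proving (rank $=|C|$ iff d-separation given $C$ in $\Gcal$) is false. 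Second, your certificate is the wrong combinatorial object: the max-flow--min-cut duality behind \cref{prop:graphical_criterion_original_rank} is over trek systems with \emph{no sided intersection}, not vertex-disjoint treks. In selection cases every cross trek passes through the response nodes, so $|C|+k+1$ vertex-disjoint treks typically do not exist even when the rank is $|C|+1$; in the example above the two certifying treks $X_{a}\leftarrow L_a\rightarrow Y$ and $Y\leftarrow L_b\rightarrow X_{b}$ share $Y$ but on opposite sides, which is allowed, while vertex-disjointness would cap your flow at $1$. Both issues would need to be repaired (replace ``given $C$ in $\Gcal$'' by ``given $C\cup Y$ in $\Gcal^{(\mathcal{S})}$'' throughout, and argue with sided trek systems), at which point the bookkeeping becomes heavier than the short algebraic route the paper intends.
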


\cref{prop:rank_in_one_factor} is a direct consequence from the graphical criterion, showing that the d-separations among $L$ persist as low-rank structures in $X$, even under selection bias. By recovering these CIs, constraint-based algorithms such as FCI can then be applied to $L$ as if \(L\) were directly observed. The next section presents experiments that leverage this insight.\looseness=-1

\vspace{-0.6em}
\section{Experiments and Results}
\label{sec:experiments}
\begin{figure*}[t]
    \centering
    \captionsetup{format=hang}
    \begin{subfigure}{.33\linewidth}
        \centering
        \includegraphics[height=8em]{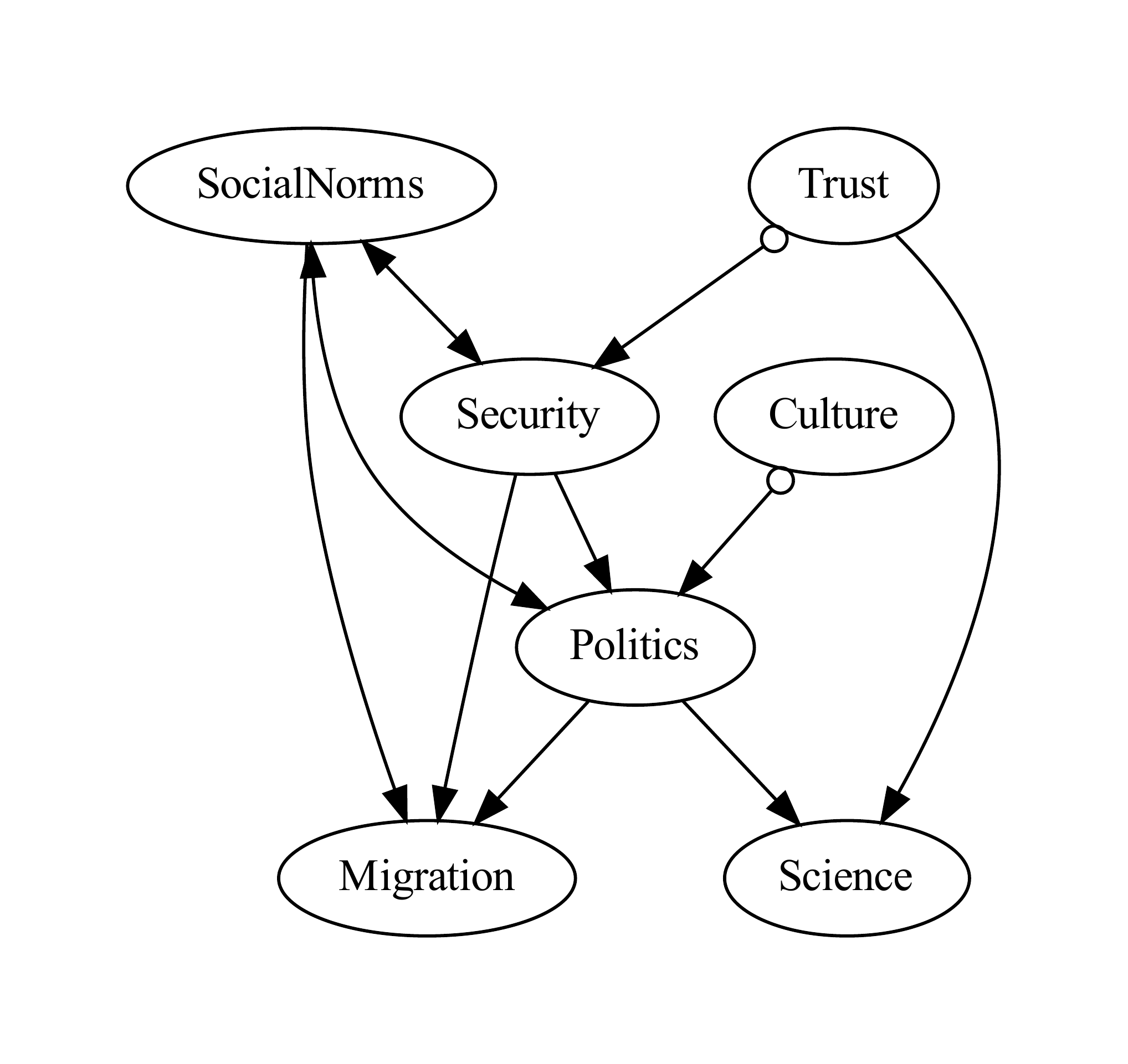}
        \caption{Output PAG in \emph{Canada}}
        \label{fig:pag1}
    \end{subfigure}%
    \begin{subfigure}{.33\linewidth}
        \centering
        \includegraphics[height=8em]{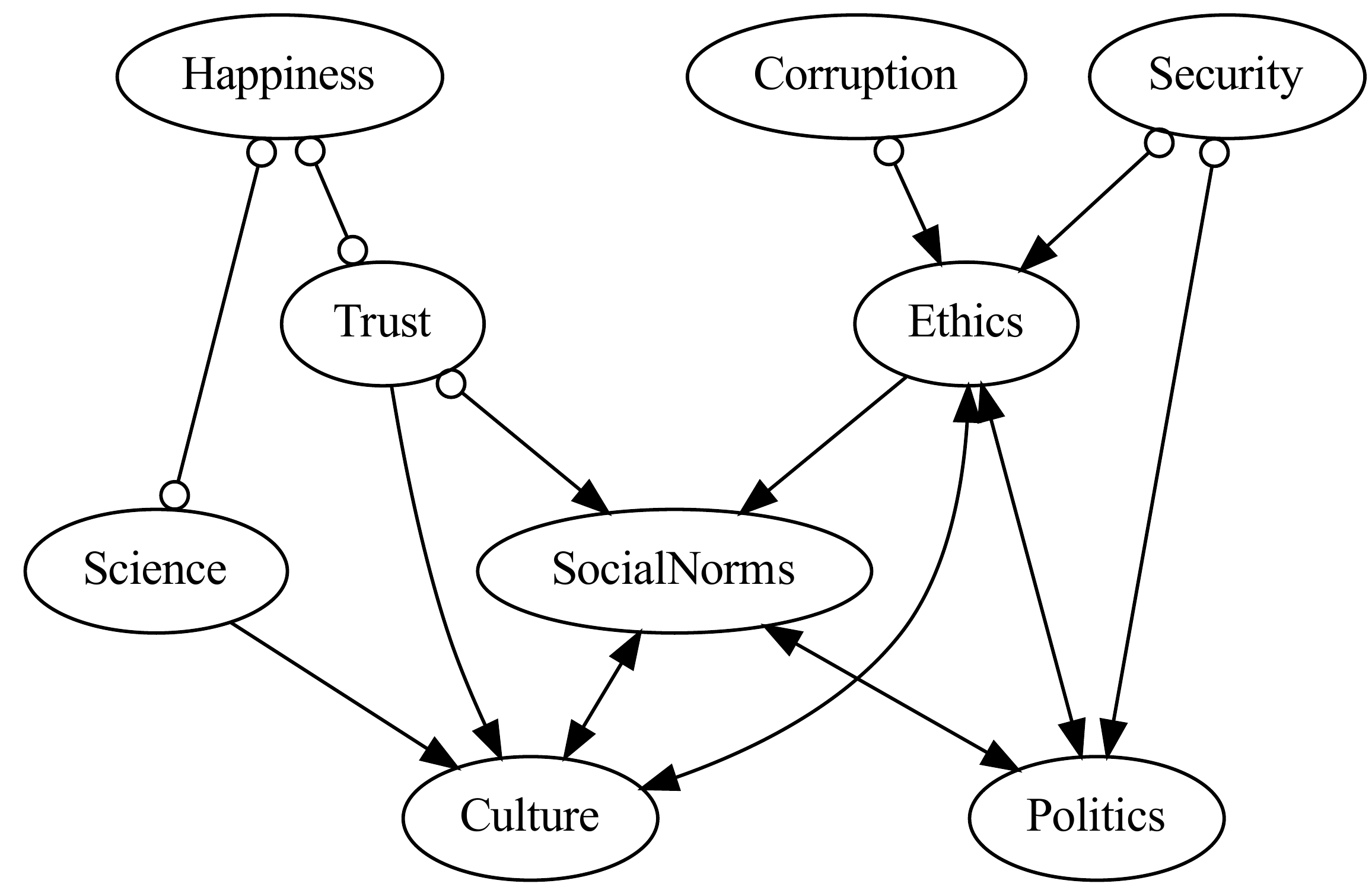}
        \caption{Output PAG in \emph{China}}
        \label{fig:pag2}
    \end{subfigure}%
    \begin{subfigure}{.34\linewidth}
        \centering
        \includegraphics[height=8em]{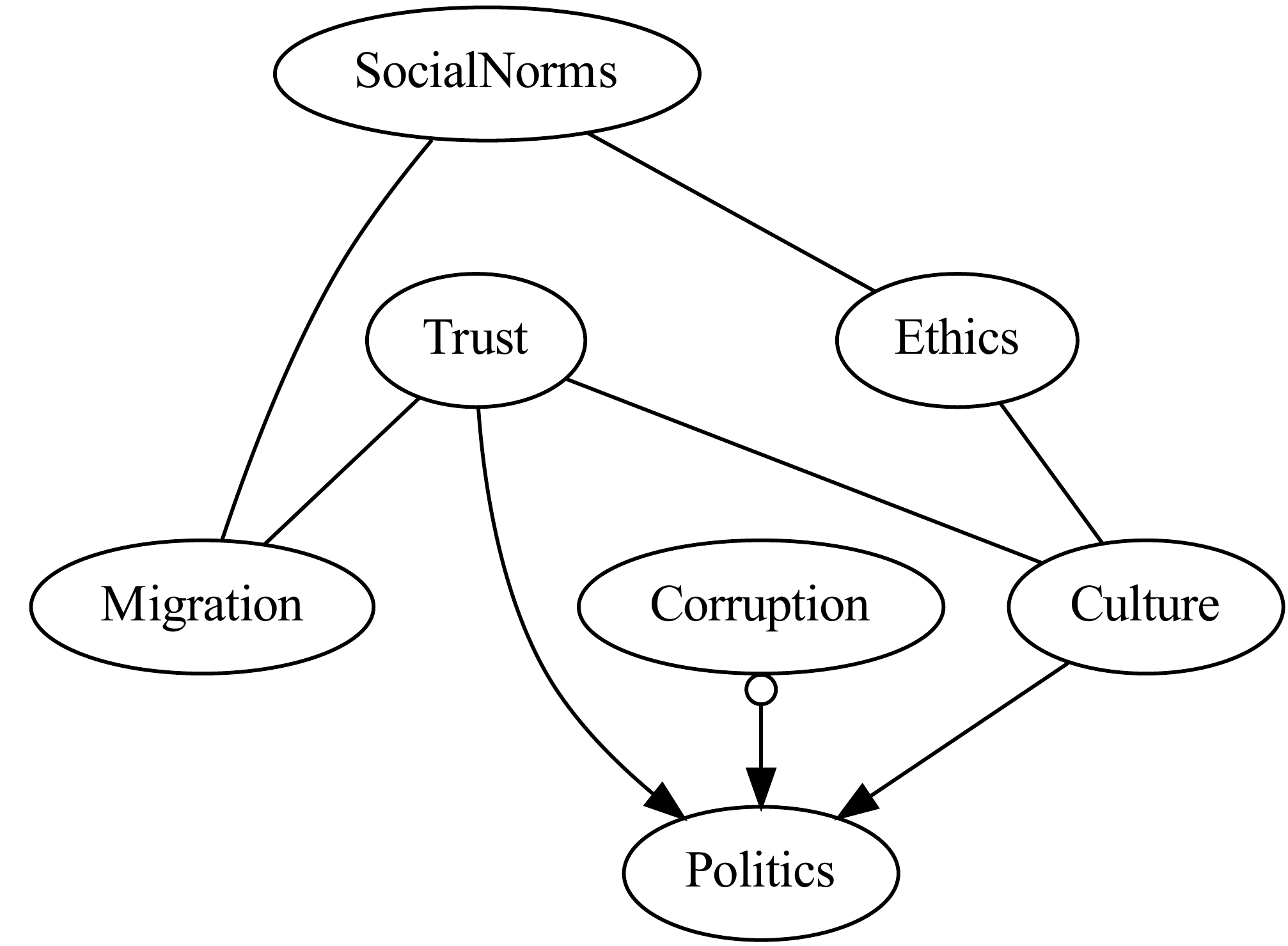}
        \caption{Output PAG in \emph{Germany}}
        \label{fig:pag3}
    \end{subfigure}
    \caption{
        Estimated PAGs on the World Value Survey dataset by country. Arrows ($\rightarrow$) indicate ancestral causal relations, double tails ($\dashdash$) and double heads ($\leftrightarrow$) suggest selection bias and latent confounding, respectively, and open circles ($\circ$) denote uncertainty in orientation.
    }
    \label{fig:WVS_results}
    \vspace{-1.0em}
\end{figure*}

Having introduced the generalized rank constraints and proposed method for one-factor model, we conduct experiments on synthetic and real-world data, showing that our method effectively recovers the causal structure under selection.\footnote{An implementation of our method is available at \url{https://github.com/MarkDana/Latent-Selection}.}

\vspace{-0.6em}
\subsection{Experiments on Synthetic Data}
We conduct empirical studies on synthetic data to evaluate our method against existing ones. Specifically, we simulate linear SEMs under the truncated probit model. We first generate a random Erd\"{o}s--R\'{e}nyi graph~\citep{Erdos1959random} among the $n \in \{5,10,15,20\}$ latent variables, with an average degree of $2$. Each latent variable has either $2$ or $3$ observed variables as children. The linear coefficients of the edges are sampled uniformly at random from $[-2,-0.5] \cup [0.5,2]$. For $n$ latent variables, we simulate $n/5$ selection variables using linear selection mechanisms with error terms, where each selection variable has an average of $\lceil 0.3n \rceil$ parents chosen from the latent variables. We then retain only the samples where these selection variables fall within the $40$th to $60$th percentile of their values. Here, we consider both Gaussian and exponential distributions for the error terms.\looseness=-1

Since the goal is to validate \cref{prop:rank_in_one_factor}, we evaluate the partial ancestral graph (PAG)~\citep{spirtes2000causation} among the latent variables estimated by FCI. Thus, we assume access to the oracle clustering in the one-factor model—that is, knowing which observed variables correspond to the measurements of which latent variables. We compare our method against FCI~\citep{spirtes2000causation}, PC~\citep{Spirtes1991}, and BOSS~\citep{andrews2023fast}. We include BOSS in the comparison as it has been shown to outperform classical score-based methods such as GES~\citep{chickering2002optimal}. Since these three methods learn structures over observed variables rather than directly modeling the relationships among latent variables, we randomly select one observed child (i.e., measurement) as the representative for each latent variable and use these methods to infer the structure among these chosen representatives. For PC and BOSS, we convert their output to a PAG. To evaluate the estimated structure among the latent variables, we report the differences in edge marks for the estimated PAG and the true one.\looseness=-1

\begin{figure}
    \centering
    \includegraphics[width=0.7\linewidth]{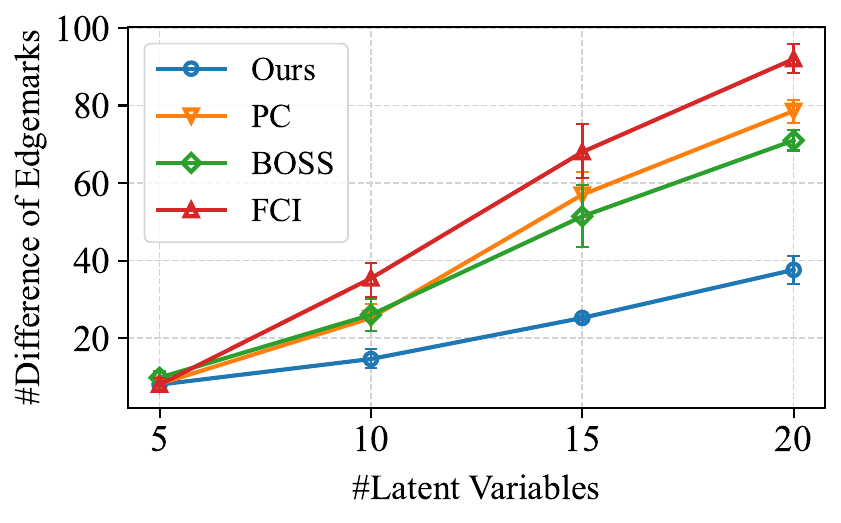}
    \vspace{-1em}
    \caption{Results with Gaussian error terms. The y-axis shows the total number of differing edge marks between the ground-truth PAG and the estimated PAG (lower is better).\looseness=-1}
    \vspace{-2em}
\label{fig:simulation_results_gaussian}
\end{figure}

The results for Gaussian error terms are presented in \cref{fig:simulation_results_gaussian}, while those for exponential error terms are provided in
\cref{fig:simulation_results_exponential_shd,fig:simulation_results_exponential_diffmark} in~\cref{app:experiments}. Notably, our method performs the best across all settings, with the performance gap widening as the number of variables increases. This demonstrates the effectiveness of our approach in recovering the latent PAG under selection bias and further validates \cref{prop:rank_in_one_factor}. All experiments are from $5$ random runs with $2$ CPUs and $16$ GB of memory. Each run with a number of latent variables of $5, 10,$ or $15$ requires less than one second, and with $20$ latent variables requires less than five minutes.

\vspace{-0.5em}
\subsection{Experiments on Real-World Data}
\paragraph{Datasets.} In this part, we present experimental results on real-world datasets. We examine the ($1$) \textbf{World Value Survey}\footnote{\url{https://www.worldvaluessurvey.org/WVSDocumentationWV7.jsp}} (WVS) dataset and the ($2$) \textbf{Big Five
Personality}\footnote{\url{https://openpsychometrics.org/}} (BIG5) dataset. Both datasets are in questionnaire format, and thus, can be effectively captured by the one-factor model. As for ($1$), WVS is the largest non-commercial academic social survey program, organized in waves and conducted every five years. The survey provides time-series data spanning $39$ years (1981–2020), includes over $600$ indicators (questions), and covers $120$ countries, ensuring a global scope.  WVS is devoted to the scientific and academic study of social, political, and cultural values of people in the world, nevertheless, the relations between these values remain under-researched. Here, we focus on $259$ indicators of  $13$ core variables of interest. The core variables include \texttt{Social Norms},  \verb|Happiness|, \verb|Security Values|, \verb|Ethical Values| and so on. For  succinctness, a single keyword is employed for each variable represented in the graph. Data are collected across different nations, with varying sizes ($500\sim 4000$ samples). Detailed information for the nation-specific data and the preprocessing scheme are provided in~\cref{app:experiments}. As for ($2$), the five personality dimensions are \verb|Openness|,
\verb|Conscientiousness|, \verb|Extraversion|, \verb|Agreeableness|, and \verb|Neuroticism| (O-C-E-A-N), are measured with
their own $10$ indicators, with a total of $50$ questions and approximately $20000$ samples. This dataset has been closely examined by~\citet{dong2023versatile}, however, the possibility of selection bias has not been considered.

\vspace{-0.6em}
\paragraph{Results.}  
\cref{fig:WVS_results} presents the results corresponding to the estimated PAG over three countries in different continents: \emph{Canada}, \emph{China}, and \emph{Germany}. Note that some variables are missing in certain countries due to the low response rate. \emph{Finding (A)}: We observe a similar (potential) selection pattern across three countries. In particular, the tail at node \verb|Social Trust| is indicative of the node's potential role as an ancestor of selection. The interpretation is that people's propensity to complete a questionnaire is based on their degree of trust towards others. \emph{Finding (B)}: Different countries also exhibit nation-specific potential selection patterns. As shown in~\cref{fig:pag2}, we can observe a potential selection on \verb|Perception of Science|. This potential selection can be interpreted as follows: a positive perception of scientific studies is associated with a higher probability of participation in this social science study. Moreover, in~\cref{fig:pag3}, we observe that selection bias involving five variables is present. \emph{Finding (C)}: For the BIG5, a selection based on \verb|Agreeableness| is present, where a hypothesis could be made that higher levels of agreeableness may facilitate higher levels of responsiveness.  We leave the result for BIG5 in~\cref{app:experiments}. These findings not only serve as intuitive validation of our main theorems and method, but also help us to recover the whole ground-truth underlying data structure by recognizing the potential selection.

\vspace{-0.6em}
\section{Conclusion and Limitations}
\label{sec:conclusions}
\vspace{-0.4em}
In this work, we develop the generalized rank constraints, providing a tool to identify latent causal structure under selection bias. A limitation is that a complete rank equivalence characterization has yet to be developed.

\section*{Acknowledgment}
We would like to acknowledge the support from NSFAward No. 2229881, AI Institute for Societal Decision Making (AI-SDM),the National Institutes of Health (NIH) under Contract R01HL159805, and grants from Quris AI, Florin Court Capital, and MBZUAI-WIS Joint Program. IN acknowledges the support of the Natural Sciences and Engineering Research Council of Canada (NSERC) Postgraduate Scholarships – Doctoral program.

\section*{Impact Statement}
This paper presents work whose goal is to advance the field of causal discovery. Our method has implications for analyzing e.g., personality traits, political attitudes, and cultural differences in social science related datasets. While our work enhances such tasks, responsible application is crucial to avoid misinterpretation on selections in the results.

\bibliography{references}

@book{pearl2009models,
  title={Causality: models, reasoning and inference},
  author={Pearl, Judea},
  publisher = {Cambridge University Press},
  year={2009}
}

@book{spirtes2000causation,
  title={Causation, prediction, and search},
  author={Spirtes, Peter and Glymour, Clark N and Scheines, Richard and Heckerman, David},
  year={2000},
  publisher={MIT press}
}

@article{dong2024parameter,
  title={On the Parameter Identifiability of Partially Observed Linear Causal Models},
  author={Dong, Xinshuai and Ng, Ignavier and Huang, Biwei and Sun, Yuewen and Jin, Songyao and Legaspi, Roberto and Spirtes, Peter and Zhang, Kun},
  journal={arXiv preprint arXiv:2407.16975},
  year={2024}
}

@inproceedings{dai2024gene,
  title={Gene Regulatory Network Inference in the Presence of Dropouts: a Causal View},
  author={Haoyue Dai and Ignavier Ng and Gongxu Luo and Petar Stojanov and Peter Spirtes and Kun Zhang},
  booktitle={The Twelfth International Conference on Learning Representations},
  year={2024}
}

@inproceedings{dai2024local,
  title={Local causal discovery with linear non-gaussian cyclic models},
  author={Dai, Haoyue and Ng, Ignavier and Zheng, Yujia and Gao, Zhengqing and Zhang, Kun},
  booktitle={International Conference on Artificial Intelligence and Statistics},
  pages={154--162},
  year={2024},
  organization={PMLR}
}

@article{qiu2024identifying,
  title={Identifying Selections for Unsupervised Subtask Discovery},
  author={Qiu, Yiwen and Zheng, Yujia and Zhang, Kun},
  journal={arXiv preprint arXiv:2410.21616},
  year={2024}
}

@article{mohan2013graphical,
  title={Graphical models for inference with missing data},
  author={Mohan, Karthika and Pearl, Judea and Tian, Jin},
  journal={Advances in Neural Information Processing Systems},
  volume={26},
  year={2013}
}

@inproceedings{mohan2018handling,
  title={On handling self-masking and other hard missing data problems},
  author={Mohan, Karthika},
  booktitle={AAAI Symposium},
  year={2018}
}

@article{chickering2002optimal,
  title={Optimal structure identification with greedy search},
  author={Chickering, David Maxwell},
  journal={Journal of Machine Learning Research},
  volume={3},
  number={Nov},
  pages={507--554},
  year={2002}
}

@article{dai2022independence,
  title={Independence Testing-Based Approach to Causal Discovery under Measurement Error and Linear Non-Gaussian Models},
  author={Dai, Haoyue and Spirtes, Peter and Zhang, Kun},
  journal={Advances in Neural Information Processing Systems},
  volume={35},
  pages={27524--27536},
  year={2022}
}

@misc{robins2000marginal,
  title={Marginal structural models and causal inference in epidemiology},
  author={Robins, James M and Hernan, Miguel Angel and Brumback, Babette},
  journal={Epidemiology},
  volume={11},
  number={5},
  pages={550--560},
  year={2000},
}

@misc{spirtes1999algorithm,
  title={An algorithm for causal inference in the presence of latent variables and selection bias (Vol. 1)},
  author={Spirtes, P and Meek, C and Richardson, T},
  year={1999},
  publisher={MIT Press}
}

@inproceedings{tillman2011learning,
  title={Learning equivalence classes of acyclic models with latent and selection variables from multiple datasets with overlapping variables},
  author={Tillman, Robert and Spirtes, Peter},
  booktitle={Proceedings of the Fourteenth International Conference on Artificial Intelligence and Statistics},
  pages={3--15},
  year={2011},
  organization={JMLR Workshop and Conference Proceedings}
}

@article{richardson2002ancestral,
  title={Ancestral graph Markov models},
  author={Richardson, Thomas and Spirtes, Peter},
  journal={The Annals of Statistics},
  volume={30},
  number={4},
  pages={962--1030},
  year={2002},
  publisher={Institute of Mathematical Statistics}
}

@article{zhang2008completeness,
  title={On the completeness of orientation rules for causal discovery in the presence of latent confounders and selection bias},
  author={Zhang, Jiji},
  journal={Artificial Intelligence},
  volume={172},
  number={16-17},
  pages={1873--1896},
  year={2008},
  publisher={Elsevier}
}

@book{heckman1977sample,
  title={Sample selection bias as a specification error (with an application to the estimation of labor supply functions)},
  author={Heckman, James J},
  volume={172},
  year={1977},
  publisher={National Bureau of Economic Research Cambridge, MA}
}

@article{hernan2004structural,
  title={A structural approach to selection bias},
  author={Hern{\'a}n, Miguel A and Hern{\'a}ndez-D{\'\i}az, Sonia and Robins, James M},
  journal={Epidemiology},
  volume={15},
  number={5},
  pages={615--625},
  year={2004},
  publisher={LWW}
}

@article{heckman1990varieties,
  title={Varieties of selection bias},
  author={Heckman, James},
  journal={The American Economic Review},
  volume={80},
  number={2},
  pages={313--318},
  year={1990},
  publisher={JSTOR}
}

@inproceedings{bareinboim2012controlling,
  title={Controlling selection bias in causal inference},
  author={Bareinboim, Elias and Pearl, Judea},
  booktitle={Artificial Intelligence and Statistics},
  pages={100--108},
  year={2012},
  organization={PMLR}
}

@inproceedings{bareinboim2014recovering,
  title={Recovering from selection bias in causal and statistical inference},
  author={Bareinboim, Elias and Tian, Jin and Pearl, Judea},
  booktitle={Probabilistic and causal inference: The works of Judea Pearl},
  pages={433--450},
  year={2014}
}

@inproceedings{zhang2016identifiability,
  title={On the Identifiability and Estimation of Functional Causal Models in the Presence of Outcome-Dependent Selection.},
  author={Zhang, Kun and Zhang, Jiji and Huang, Biwei and Sch{\"o}lkopf, Bernhard and Glymour, Clark},
  booktitle={UAI},
  year={2016}
}

@inproceedings{correa2019identification,
  title={Identification of causal effects in the presence of selection bias},
  author={Correa, Juan D and Tian, Jin and Bareinboim, Elias},
  booktitle={Proceedings of the AAAI Conference on Artificial Intelligence},
  volume={33},
  pages={2744--2751},
  year={2019}
}

@inproceedings{versteeg2022local,
  title={Local constraint-based causal discovery under selection bias},
  author={Versteeg, Philip and Mooij, Joris and Zhang, Cheng},
  booktitle={Conference on Causal Learning and Reasoning},
  pages={840--860},
  year={2022},
  organization={Pmlr}
}

@article{zheng2024detecting,
  title={Detecting and Identifying Selection Structure in Sequential Data},
  author={Zheng, Yujia and Tang, Zeyu and Qiu, Yiwen and Sch{\"o}lkopf, Bernhard and Zhang, Kun},
  journal={arXiv preprint arXiv:2407.00529},
  year={2024}
}

@inproceedings{evans2015recovering,
  title={Recovering from Selection Bias using Marginal Structure in Discrete Models.},
  author={Evans, Robin J and Didelez, Vanessa},
  booktitle={ACI@ UAI},
  pages={46--55},
  year={2015}
}

@article{erdos1959random,
  author = {Erd\"os, P and R\'enyi, A},
  title = {On Random Graphs {I}},
  journal = {Publicationes Mathematicae},
  pages = {290--297},
  volume = {6},
  year = {1959}
}

@article{sullivant2010trek,
  title={Trek separation for Gaussian graphical models},
  author={Sullivant, Seth and Talaska, Kelli and Draisma, Jan},
  year={2010},
  journal={AOS}
}

@article{xie2020generalized,
  title={Generalized independent noise condition for estimating latent variable causal graphs},
  author={Xie, Feng and Cai, Ruichu and Huang, Biwei and Glymour, Clark and Hao, Zhifeng and Zhang, Kun},
  journal={Advances in neural information processing systems},
  volume={33},
  pages={14891--14902},
  year={2020}
}

@article{adams2021identification,
  title={Identification of partially observed linear causal models: Graphical conditions for the non-gaussian and heterogeneous cases},
  author={Adams, Jeffrey and Hansen, Niels and Zhang, Kun},
  journal={Advances in Neural Information Processing Systems},
  volume={34},
  pages={22822--22833},
  year={2021}
}

@inproceedings{chen2024identification,
  title={Identification of Causal Structure with Latent Variables Based on Higher Order Cumulants},
  author={Chen, Wei and Huang, Zhiyi and Cai, Ruichu and Hao, Zhifeng and Zhang, Kun},
  booktitle={Proceedings of the AAAI Conference on Artificial Intelligence},
  volume={38},
  pages={20353--20361},
  year={2024}
}

@incollection{drton2018algebraic,
  title={Algebraic problems in structural equation modeling},
  author={Drton, Mathias},
  booktitle={The 50th anniversary of Gr{\"o}bner bases},
  volume={77},
  pages={35--87},
  year={2018},
  publisher={Mathematical Society of Japan}
}

@inproceedings{anandkumar2013learning,
  title={Learning linear bayesian networks with latent variables},
  author={Anandkumar, Animashree and Hsu, Daniel and Javanmard, Adel and Kakade, Sham},
  booktitle={International Conference on Machine Learning},
  pages={249--257},
  year={2013},
  organization={PMLR}
}

@article{cui2018learning,
  title={Learning the causal structure of copula models with latent variables},
  author={Cui, Ruifei and Groot, Perry and Schauer, Moritz and Heskes, Tom},
  year={2018},
  publisher={Corvallis: AUAI Press},
  journal={UAI}
}

@article{kivva2021learning,
  title={Learning latent causal graphs via mixture oracles},
  author={Kivva, Bohdan and Rajendran, Goutham and Ravikumar, Pradeep and Aragam, Bryon},
  journal={Advances in Neural Information Processing Systems},
  volume={34},
  pages={18087--18101},
  year={2021}
}

@inproceedings{silva2003learning,
author = {Silva, Ricardo and Scheines, Richard and Glymour, Clark and Spirtes, Peter},
title = {Learning measurement models for unobserved variables},
year = {2003},
booktitle = {Proceedings of the Nineteenth Conference on Uncertainty in Artificial Intelligence},
series = {UAI'03}
}

@article{silva2006learning,
  title={Learning the Structure of Linear Latent Variable Models.},
  author={Silva, Ricardo and Scheines, Richard and Glymour, Clark and Spirtes, Peter and Chickering, David Maxwell},
  journal={Journal of Machine Learning Research},
  volume={7},
  number={2},
  year={2006}
}

@book{silva2004generalized,
  title={Generalized measurement models},
  author={Silva, Ricardo and Scheines, Richard},
  year={2004},
  publisher={Carnegie Mellon University. Center for Automated Learning and Discovery}
}

@article{choi2011learning,
  title={Learning latent tree graphical models},
  author={Choi, Myung Jin and Tan, Vincent YF and Anandkumar, Animashree and Willsky, Alan S},
  journal={The Journal of Machine Learning Research},
  volume={12},
  pages={1771--1812},
  year={2011},
  publisher={JMLR. org}
}

@inproceedings{kummerfeld2016causal,
  title={Causal clustering for 1-factor measurement models},
  author={Kummerfeld, Erich and Ramsey, Joseph},
  booktitle={Proceedings of the 22nd ACM SIGKDD international conference on knowledge discovery and data mining},
  pages={1655--1664},
  year={2016}
}

@article{huang2022latent,
  title={Latent hierarchical causal structure discovery with rank constraints},
  author={Huang, Biwei and Low, Charles Jia Han and Xie, Feng and Glymour, Clark and Zhang, Kun},
  journal={Advances in neural information processing systems},
  volume={35},
  pages={5549--5561},
  year={2022}
}

@article{dong2023versatile,
  title={A versatile causal discovery framework to allow causally-related hidden variables},
  author={Dong, Xinshuai and Huang, Biwei and Ng, Ignavier and Song, Xiangchen and Zheng, Yujia and Jin, Songyao and Legaspi, Roberto and Spirtes, Peter and Zhang, Kun},
  journal={arXiv preprint arXiv:2312.11001},
  year={2023}
}

@inproceedings{kaltenpoth2023identifying,
  title={Identifying selection bias from observational data},
  author={Kaltenpoth, David and Vreeken, Jilles},
  booktitle={Proceedings of the AAAI Conference on Artificial Intelligence},
  volume={37},
  pages={8177--8185},
  year={2023}
}

@article{kan2017moments,
  title={On moments of folded and truncated multivariate normal distributions},
  author={Kan, Raymond and Robotti, Cesare},
  journal={Journal of Computational and Graphical Statistics},
  volume={26},
  number={4},
  pages={930--934},
  year={2017},
  publisher={Taylor \& Francis}
}

@Article{Spirtes1991,
  author       = {Spirtes, Peter and Glymour, Clark},
  year         = {1991},
  journal = {Social Science Computer Review},
  title        = {An Algorithm for Fast Recovery of Sparse Causal Graphs},
  pages        = {62--72},
  volume       = {9},
}

@article{spearman1914theory,
  title={The theory of two factors.},
  author={Spearman, Charles},
  journal={Psychological Review},
  volume={21},
  number={2},
  pages={101},
  year={1914},
  publisher={Psychological Review Company}
}

@inproceedings{pearl1987logic,
  title={The logic of representing dependencies by directed graphs},
  author={Pearl, Judea and Verma, Thomas},
  booktitle={Proceedings of the sixth National conference on Artificial intelligence-Volume 1},
  pages={374--379},
  year={1987}
}

@article{dubin1989selection,
  title={Selection bias in linear regression, logit and probit models},
  author={Dubin, Jeffrey A and Rivers, Douglas},
  journal={Sociological Methods \& Research},
  volume={18},
  number={2-3},
  pages={360--390},
  year={1989},
  publisher={Sage Publications}
}

@article{lande1983measurement,
  title={The measurement of selection on correlated characters},
  author={Lande, Russell and Arnold, Stevan J},
  journal={Evolution},
  pages={1210--1226},
  year={1983},
  publisher={JSTOR}
}

@inproceedings{andrews2023fast,
  title={Fast Scalable and Accurate Discovery of {DAG}s Using the Best Order Score Search and Grow Shrink Trees},
  author={Bryan Andrews and Joseph Ramsey and Ruben Sanchez Romero and Jazmin Camchong and Erich Kummerfeld},
  booktitle={Thirty-seventh Conference on Neural Information Processing Systems},
  year={2023},
}

@article{drton2020nested,
  title={Nested covariance determinants and restricted trek separation in Gaussian graphical models},
  author={Drton, Mathias and Robeva, Elina and Weihs, Luca},
  year={2020},
  journal={Bernoulli}
}

@article{richardson2023nested,
  title={Nested Markov properties for acyclic directed mixed graphs},
  author={Richardson, Thomas S and Evans, Robin J and Robins, James M and Shpitser, Ilya},
  journal={The Annals of Statistics},
  volume={51},
  number={1},
  pages={334--361},
  year={2023},
  publisher={Institute of Mathematical Statistics}
}

@incollection{pearl1995theory,
  title={A theory of inferred causation},
  author={Pearl, Judea and Verma, Thomas S},
  booktitle={Studies in Logic and the Foundations of Mathematics},
  volume={134},
  pages={789--811},
  year={1995},
  publisher={Elsevier}
}

@inproceedings{bhattacharya2022testability,
  title={On testability of the front-door model via Verma constraints},
  author={Bhattacharya, Rohit and Nabi, Razieh},
  booktitle={Uncertainty in Artificial Intelligence},
  pages={202--212},
  year={2022},
  organization={PMLR}
}

@article{silva2017learning,
  title={Learning instrumental variables with structural and non-gaussianity assumptions},
  author={Silva, Ricardo and Shimizu, Shohei},
  journal={Journal of Machine Learning Research},
  volume={18},
  number={120},
  pages={1--49},
  year={2017}
}

@inproceedings{ngscore,
  title={Score-Based Causal Discovery of Latent Variable Causal Models},
  author={Ng, Ignavier and Dong, Xinshuai and Dai, Haoyue and Huang, Biwei and Spirtes, Peter and Zhang, Kun},
  booktitle={Forty-first International Conference on Machine Learning},
  year={2024}
}

@inproceedings{dai2025when,
  title={When Selection Meets Intervention: Additional Complexities in Causal Discovery},
  author={Haoyue Dai and Ignavier Ng and Jianle Sun and Zeyu Tang and Gongxu Luo and Xinshuai Dong and Peter Spirtes and Kun Zhang},
  booktitle={The Thirteenth International Conference on Learning Representations},
  year={2025}
}

@article{robeva2021multi,
  title={Multi-trek separation in linear structural equation models},
  author={Robeva, Elina and Seby, Jean-Baptiste},
  journal={SIAM Journal on Applied Algebra and Geometry},
  volume={5},
  number={2},
  pages={278--303},
  year={2021},
  publisher={SIAM}
}

@article{bareinboim2016causal,
  title={Causal inference and the data-fusion problem},
  author={Bareinboim, Elias and Pearl, Judea},
  journal={Proceedings of the National Academy of Sciences},
  volume={113},
  number={27},
  pages={7345--7352},
  year={2016},
  publisher={National Acad Sciences}
}

@inproceedings{bareinboim2015recovering,
  title={Recovering causal effects from selection bias},
  author={Bareinboim, Elias and Tian, Jin},
  booktitle={Proceedings of the AAAI Conference on Artificial Intelligence},
  volume={29},
  year={2015}
}

@inproceedings{bareinboim2022recovering,
  title={Recovering from selection bias in causal and statistical inference},
  author={Bareinboim, Elias and Tian, Jin and Pearl, Judea},
  booktitle={Probabilistic and causal inference: The works of Judea Pearl},
  pages={433--450},
  year={2022}
}

@inproceedings{jabbari2017discovery,
  title={Discovery of causal models that contain latent variables through Bayesian scoring of independence constraints},
  author={Jabbari, Fattaneh and Ramsey, Joseph and Spirtes, Peter and Cooper, Gregory},
  booktitle={Machine Learning and Knowledge Discovery in Databases: European Conference, ECML PKDD 2017, Skopje, Macedonia, September 18--22, 2017, Proceedings, Part II 10},
  pages={142--157},
  year={2017},
  organization={Springer}
}
\bibliographystyle{references}

\newpage
\appendix
\onecolumn
\numberwithin{equation}{section}
\normalsize

\section{Proofs of Main Results}
\label{app:proofs}

\THMGRAPHICALCRITERIONRANKCONSTRAINTS*

\begin{proof}[Proof of~\cref{thm:graphical_criterion_rank_constraints}]

To prove this graphical criterion, we are to show an equality between ranks in selected and unselected covariances:
\[
\operatorname{rank}(\Sigma^{(\mathcal{S})}_{A,B}) = \operatorname{rank}(\Sigma_{A\cup Y,B\cup Y})-|Y|,
\]
where $\Sigma$ denotes the original global joint covariance matrix of \( (X, Y) \), before the selection happens.

The proof consists of the following main steps. First, we establish the rank structure when each selection variable \( Y_i \) is Gaussian and is conditioned on a single fixed value (i.e., each \( \mathcal{Y}_i \) is a singleton subset of \( \mathbb{R} \)). Then, we extend the result to the case where the selection noise terms \( \epsilon_i \) are still Gaussian but \( \mathcal{Y}_i \) may contain multiple values. Finally, we generalize to the case where the selection noise terms \( \epsilon_i \) are not necessarily Gaussian.

\paragraph{Step 1: Rank structure under pointwise Gaussian selection.}
Suppose \( \epsilon_i \) is Gaussian and each \( \mathcal{Y}_i = \{y_i\} \) is a singleton. Let \( \Sigma \) denote the joint covariance matrix of \( (X, Y) \). Since \( (X, Y) \) is jointly Gaussian, the conditional covariance of \( X \mid Y = y \) at all different values of $y$ can be expressed as a fixed term:

\[
\operatorname{Cov}(X \mid Y = y) = \Sigma_{X,X} - \Sigma_{X,Y} \Sigma_{Y,Y}^{-1} \Sigma_{Y,X}.
\]

Now we show how the ranks of submatrices in the conditional covariance matrix \(\operatorname{Cov}(X \mid Y = y)\) correspond to those in the original global covariance matrix \(\Sigma\). For any subsets \( A, B \subset X \), consider the original covariance submatrix:

\[
\Sigma_{A \cup Y, B \cup Y} =
\begin{bmatrix}
E & G \\
F & H
\end{bmatrix},
\]

where \( E = \Sigma_{A,B} \), \( G = \Sigma_{A,Y} \), \( F = \Sigma_{Y,B} \), \( H = \Sigma_{Y,Y} \). We then have

\begin{align*}
    \operatorname{rank} \left(
\begin{bmatrix}
E & G \\
F & H
\end{bmatrix}\right) &= \operatorname{rank} \left(
\begin{bmatrix}
I & -GH^{-1} \\
0 & I
\end{bmatrix}\begin{bmatrix}
E & G \\
F & H
\end{bmatrix}\right) \\
& = \operatorname{rank} \left(
\begin{bmatrix}
E-GH^{-1}F & 0 \\
F & H
\end{bmatrix}\right) \\
& = \operatorname{rank} \left(E-GH^{-1}F \right) + |Y|,
\end{align*}
where the first step is due to multiplying an invertible unit upper triangular matrix, and the second step is because that $H$ is invertible, and thus upper rows (containing $0$ at $H$ columns) must be linearly independent with lower rows.

In it, $E-GH^{-1}F$ is $\Sigma_{A,B} - \Sigma_{A,Y} \Sigma_{Y,Y}^{-1} , \Sigma_{Y,B}$, which is exactly the covariance $\Sigma^{(\mathcal{S})}_{A,B}$ in the selection-biased data. This completes the proof of this part, showing that $\operatorname{rank}(\Sigma^{(\mathcal{S})}_{A,B}) = \operatorname{rank}(\Sigma_{A\cup Y,B\cup Y})-|Y|$.

Note that in this part of pointwise selection, the above rank equality actually holds for all joint Gaussian \( X,Y \) random variables, not necessarily requiring that $Y$ are outcome-dependent selection response variables–$Y$ can also be $X$'s causes or ancestors. However, for the following multi-valued selection sets, we will explicitly require that $Y$ are $X$'s effects.

\paragraph{Step 2: Extension to multi-valued Gaussian selection sets.}
Now consider the case where each \( Y_i \) is Gaussian and is restricted to a measurable subset \( \mathcal{Y}_i \subset \mathbb{R} \), and let \( \mathcal{Y} = \mathcal{Y}_1 \times \cdots \times \mathcal{Y}_k \). The conditional covariance of \( X \mid Y \in \mathcal{Y} \) is given by the law of total covariance:

\[
\operatorname{Cov}(X \mid Y \in \mathcal{Y}) = \mathbb{E}[\operatorname{Cov}(X \mid Y) \mid Y \in \mathcal{Y}] + \operatorname{Cov}(\mathbb{E}[X \mid Y] \mid Y \in \mathcal{Y}).
\]

We analyze each term:

\begin{itemize}
    \item Due to Gaussianity, \( X \mid Y = y \) is Gaussian with constant covariance under different $y$ values, and thus we have:
    \[
    \mathbb{E}[\operatorname{Cov}(X \mid Y) \mid Y \in \mathcal{Y}] = \Sigma_{X,X} - \Sigma_{X,Y} \Sigma_{Y,Y}^{-1} \Sigma_{Y,X}.
    \]
    
    \item The conditional mean is linear:
    \[
    \mathbb{E}[X \mid Y] = \mu_X + P (Y - \mu_Y), \quad \text{where } P \coloneqq \Sigma_{X,Y} \Sigma_{Y,Y}^{-1}.
    \]
    Thus,
    \[
    \operatorname{Cov}(\mathbb{E}[X \mid Y] \mid Y \in \mathcal{Y}) = P \operatorname{Cov}(Y \mid Y \in \mathcal{Y}) P^\top.
    \]
\end{itemize}

Putting this together:

\[
\operatorname{Cov}(X \mid Y \in \mathcal{Y}) = \Sigma_{X,X} - P \Sigma_{Y,Y} P^\top + P \operatorname{Cov}(Y \mid Y \in \mathcal{Y}) P^\top.
\]

Let \( A, B \subset X \). Then the corresponding submatrix can be decomposed to two terms:

\[
\Sigma^{(\mathcal{S})}_{A,B} = M_{A,B} + K_{A,B},
\]
where let $P_A$ and $P_B$ be the submatrices of $P$ indexed by rows of $A$ and $B$, we have:
\[
M_{A,B} := \Sigma_{A,B} - P_A \Sigma_{Y,Y} P_B^\top, \quad
K_{A,B} := P_A \operatorname{Cov}(Y \mid Y \in \mathcal{Y}) P_B^\top.
\]

Note that \( M_{A,B} \) is the same matrix appearing in the pointwise selection case. So we are now to show that adding the correction term $K_{A,B}$ does not alter the rank of \( M_{A,B} \).

Write $Y = \beta X + \epsilon$, where $\beta \in \mathbb{R}^{k\times |X|}$ is the loading matrix of the selection response variables. We then have that in $P$, the term $\Sigma_{X,Y}$ equals $\Sigma_{X,X} \beta^\top$. Thus, the columns of $P\in \mathbb{R}^{ |X| \times k}$ lie in the column space of $\Sigma_{X,X}$, i.e., images $\operatorname{Im}(K_{A,B}) \subseteq \operatorname{Im}(\Sigma_{A,Y}) \subseteq \operatorname{Im}(\Sigma_{A,X})$. Since also both \( M_{A,B} \) and \( K_{A,B} \) lie within the row space of \( \Sigma_{A \cup Y, B \cup Y} \), we have:
\[
\operatorname{rank}(\Sigma^{(\mathcal{S})}_{A,B}) = \operatorname{rank}(M_{A,B} + K_{A,B}) \le 
\operatorname{rank}(M_{A,B})=\operatorname{rank}(\Sigma_{A \cup Y, B \cup Y}) - |Y|.
\]

That is, the correction term \( K_{A,B} \), although nonzero, cannot increase the rank. Further, since each $\mathcal{Y}_i$ is a proper subset of $\mathbb{R}$, i.e., not all values are admissible, $\Sigma_{Y,Y}$ and $\operatorname{Cov}(Y \mid Y \in \mathcal{Y})$ will not cancel each other under generic assumption, and thus the equality holds, which completes the proof for this part, showing $\operatorname{rank}(\Sigma^{(\mathcal{S})}_{A,B}) = \operatorname{rank}(\Sigma_{A \cup Y, B \cup Y}) - |Y|.$

\paragraph{Step 3: Extension to non-Gaussian selection noise.}

Our goal is to prove that $\operatorname{rank} \left( \operatorname{Cov}(X_A, X_B \mid Y \in \mathcal{Y}) \right)
= \operatorname{rank} \left( \Sigma_{A \cup Y, B \cup Y} \right) - k$ holds, even when for each $Y_i = \beta_i^\top X +\epsilon_i$, $\epsilon_i$ may be non-Gaussian, as long as $X$ themselves are still joint Gaussian. We first note that the law of total covariance still holds:
\[
\operatorname{Cov}(X \mid Y \in \mathcal{Y}) = \mathbb{E}[\operatorname{Cov}(X \mid Y) \mid Y \in \mathcal{Y}] + \operatorname{Cov}(\mathbb{E}[X \mid Y] \mid Y \in \mathcal{Y}),
\]
where the since the linear structure still holds in the global generating process, we still have:
\[
\Sigma_{X,Y} = \Sigma_{X,X} \beta^\top, \qquad \mathbb{E}[X \mid Y] = \mu_X + P (Y - \mu_Y), \quad \text{where } P \coloneqq \Sigma_{X,Y} \Sigma_{Y,Y}^{-1}.
\]

The only difference is that, now conditioning on each singleton $y$ value, the covariance term $ \operatorname{Cov}(X| Y=y)$ is not a constant anymore. However, since $X$ themselves are still Gaussian, conditioning $X| Y=y$ is equivalent to restrict the noise terms $\epsilon = y - \beta X$, where the right hand side is Gaussian. Therefore, with $X$ independent to $\epsilon$, this imposes affine constraints on $X$, which are still linear in $X$. Since the rank reductions are exactly due to those affine constraints, we have that still
\[
\operatorname{rank}(\mathbb{E}[\operatorname{Cov}(A, B \mid Y) \mid Y \in \mathcal{Y}]) = \operatorname{rank}(\Sigma_{A\cup Y, B\cup Y}) - |Y|.
\]

Then, using the similar argument as in Step 2, we have the final rank equality, concluding the proof.

Note that when $X$ is non-Gaussian, even when $\epsilon$ is Gaussian, the above properties do not hold, because in that case, $X|Y=y$ is no longer an affine transformation of a Gaussian, and $\operatorname{Cov}(X|Y=y)$ may reflect other nonlinear constraints and have higher rank than expected.
\end{proof}

\section{Related Work}
\label{app:related_work}

In this section, we provide a comprehensive review of the relevant literature, focusing on two key areas: latent variable causal discovery and causal discovery or causal inference under selection bias.

\paragraph{Statistical tools for latent variable causal discovery} As discussed in \cref{sec:introduction}, standard conditional independence (CI)-based approaches to causal discovery fail to provide sufficient information for recovering latent structures. To address this limitation, a range of alternative statistical tools have been developed, typically by introducing additional parametric or structural constraints. These include rank constraints \citep{sullivant2010trek}, which generalize the Tetrad representation theorem from~\citet{spirtes2000causation} and provide algebraic conditions on covariance matrices; equality constraints derived from Gaussian structural equation models that even has rank constraints as a subclass \citep{drton2018algebraic}; and high-order moment constraints \citep{xie2020generalized, adams2021identification, robeva2021multi, dai2022independence, dai2024local,chen2024identification}, which exploit non-Gaussianity for identifiability. Additionally, matrix decomposition methods \citep{anandkumar2013learning}, copula-based constraints \citep{cui2018learning}, and mixture oracles \citep{kivva2021learning} were also developed.

\paragraph{Constraints in linear Gaussian structural equation model} Among the various statistical tools, rank constraints and their associated graphical criteria are particularly well-known. Recent work has extended these constraints to nested rank constraints, which characterize additional algebraic polynomial varieties in covariance matrices beyond zero-determinant conditions \citep{drton2020nested}. These extensions relate to e.g., the Verma constraint, originally formulated in the nonparametric setting \citep{pearl1995theory} and later explored with graphical criteria \citep{evans2015recovering, richardson2023nested, bhattacharya2022testability}. In the context of conditional distributions, the most closely related work is \citet{silva2017learning}, which primarily aims to validate instrumental variables, rather than establishing a more general graphical criterion for causal structure recovery in the linear non-Gaussian setting.

\paragraph{Latent variable causal discovery methods} Building on these statistical tools, a variety of latent variable causal discovery algorithms have been proposed. Many of these methods fall within the constraint-based framework, leveraging CI tests and algebraic constraints to infer causal relations. Notable examples include approaches based on rank or tetrad constraints \citep{silva2003learning, silva2006learning, silva2004generalized, choi2011learning, kummerfeld2016causal, huang2022latent, dong2023versatile, dong2024parameter}. While the majority of these methods fall within the constraint-based paradigm, recent efforts have attempted to formalize score-based methods for latent causal discovery \citep{jabbari2017discovery, ngscore}.

\paragraph{Causal modeling with selection bias} Classical approaches to deal with selection bias primarily fall into two categories: (1) nonparametric methods that leverage selection mechanisms to correct for selection effects, and (2) parametric approaches that introduce explicit selection models into causal inference frameworks.

For the purpose of causal discovery, following the foundational work of the FCI algorithm, nonparametric methods have been developed to identify causal relations using conditional independence constraints~\citep{hernan2004structural, tillman2011learning, evans2015recovering, versteeg2022local}. There are recent works that deal with selection bias in interventional studies~\citep{dai2025when} and in sequential data~\citep{zheng2024detecting,qiu2024identifying}. Several parametric approaches have been developed for bivariate causal orientation~\citep{zhang2016identifiability, kaltenpoth2023identifying}.

For the purpose of causal inference (bias adjustment), the nonparametric perspective builds on the graphical representation of selection mechanisms, as introduced in the selection diagram framework \citep{bareinboim2014recovering,bareinboim2012controlling,bareinboim2015recovering,bareinboim2016causal,bareinboim2022recovering}, which characterizes conditions under which causal effects remain identifiable despite selection bias. Subsequent works extended this approach by developing testable implications of selection mechanisms \citep{correa2019identification} and providing adjustment criteria. From a parametric perspective, selection bias has been studied extensively in economics \citep{heckman1977sample,heckman1990varieties,robins2000marginal}.

Structure learning and inference results can follow different methodological directions depending on the problem setting. A particularly relevant area is the study of data missingness, which shares similarities with selection bias. For instance, in cases involving \textit{self-masking} missingness, the true data distribution and model parameters may be unidentifiable, making causal inference infeasible~\citep{mohan2013graphical, mohan2018handling}. However, the causal structure may still be recoverable~\citep{dai2024gene}.

\section{Supplementary Experimental Details and Results}
\label{app:experiments}

\subsection{Additional Result on Synthetic Data}
The difference of Edgemarks with non-Gaussian error terms is shown in Figure.~\ref{fig:simulation_results_exponential_diffmark}. 
We also report the structural Hamming distance (SHD) of the skeleton to capture the method's ability to recover the structure up to the Markov equivalence classes (represented as PAGs). The result for Gaussian and Exponential error terms are shown in Figure.~\ref{fig:simulation_results_exponential_shd}.
\begin{figure}[!h]
    \centering
    \includegraphics[width=0.4\linewidth]{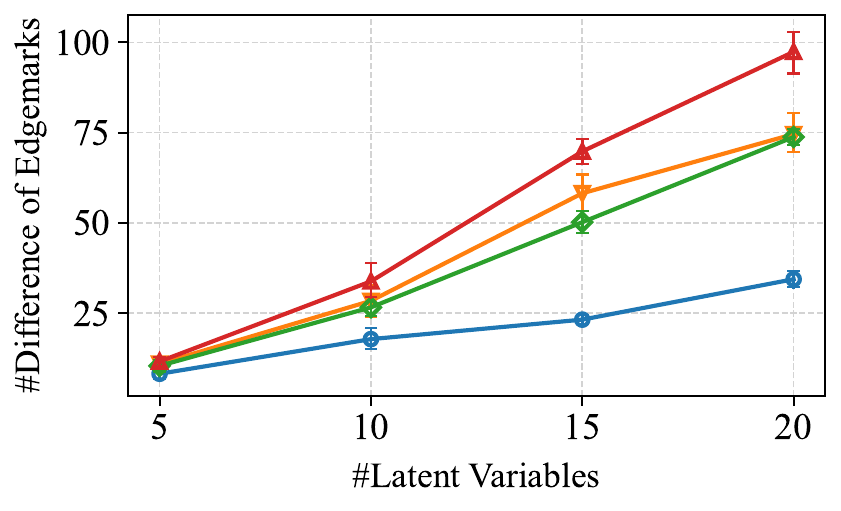}
    \vspace{-0.5em}
    \caption{Difference of Edgemarks with exponential error terms.}
    \vspace{-1em}
\label{fig:simulation_results_exponential_diffmark}
\end{figure}

\begin{figure*}[!h]
    \centering
    \includegraphics[width=0.4\linewidth]{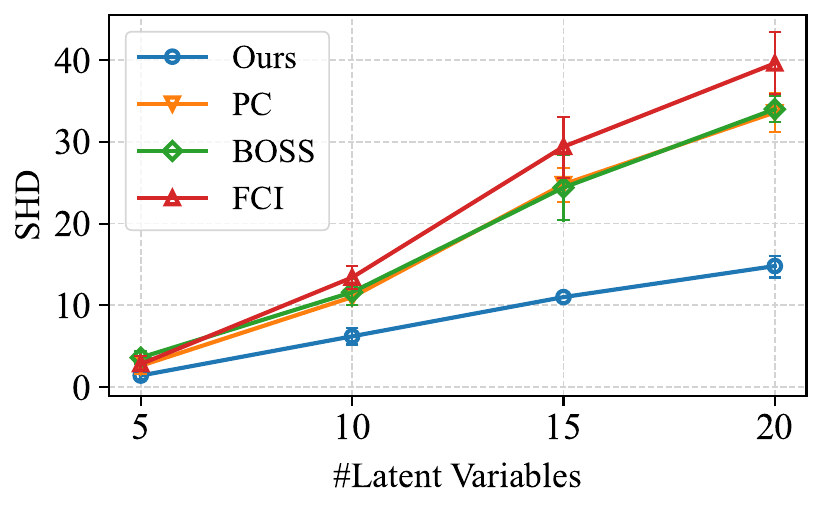}\caption{SHD results with Gaussian error terms.}
    \includegraphics[width=0.4\linewidth]{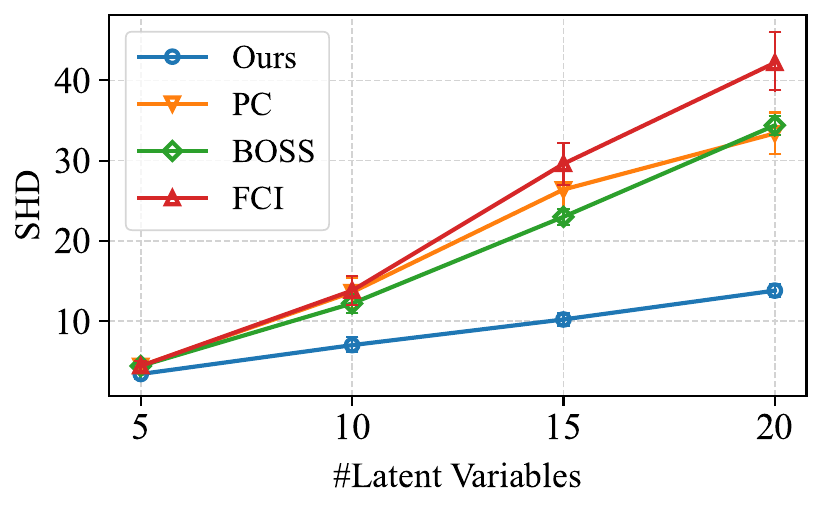}\caption{SHD results with Exponential error terms.}
    \vspace{-0.5em}
\label{fig:simulation_results_exponential_shd}
\end{figure*}

\subsection{Additional Result on Real-world Data}
To ensure a diverse and representative evaluation, Australia and India were selected as part of our experiment to encompass all continents and include more developing countries. The results, as illustrated in Figure~\ref{fig:pag_india}, indicate a potential selection bias related to social norms, which aligns with the findings from \emph{Germany} (Figure~\ref{fig:pag3}). This suggests that the survey design or respondent characteristics may systematically favor certain perspectives on social norms. Additionally, the results reveal a potential selection bias concerning happiness, as shown in Figure~\ref{fig:pag_aus}. This bias can be interpreted as an overrepresentation of individuals who are happier and healthier—both mentally and physically—within the survey sample. 
\begin{figure}[htbp]
    \centering
    \includegraphics[width=0.5\linewidth]{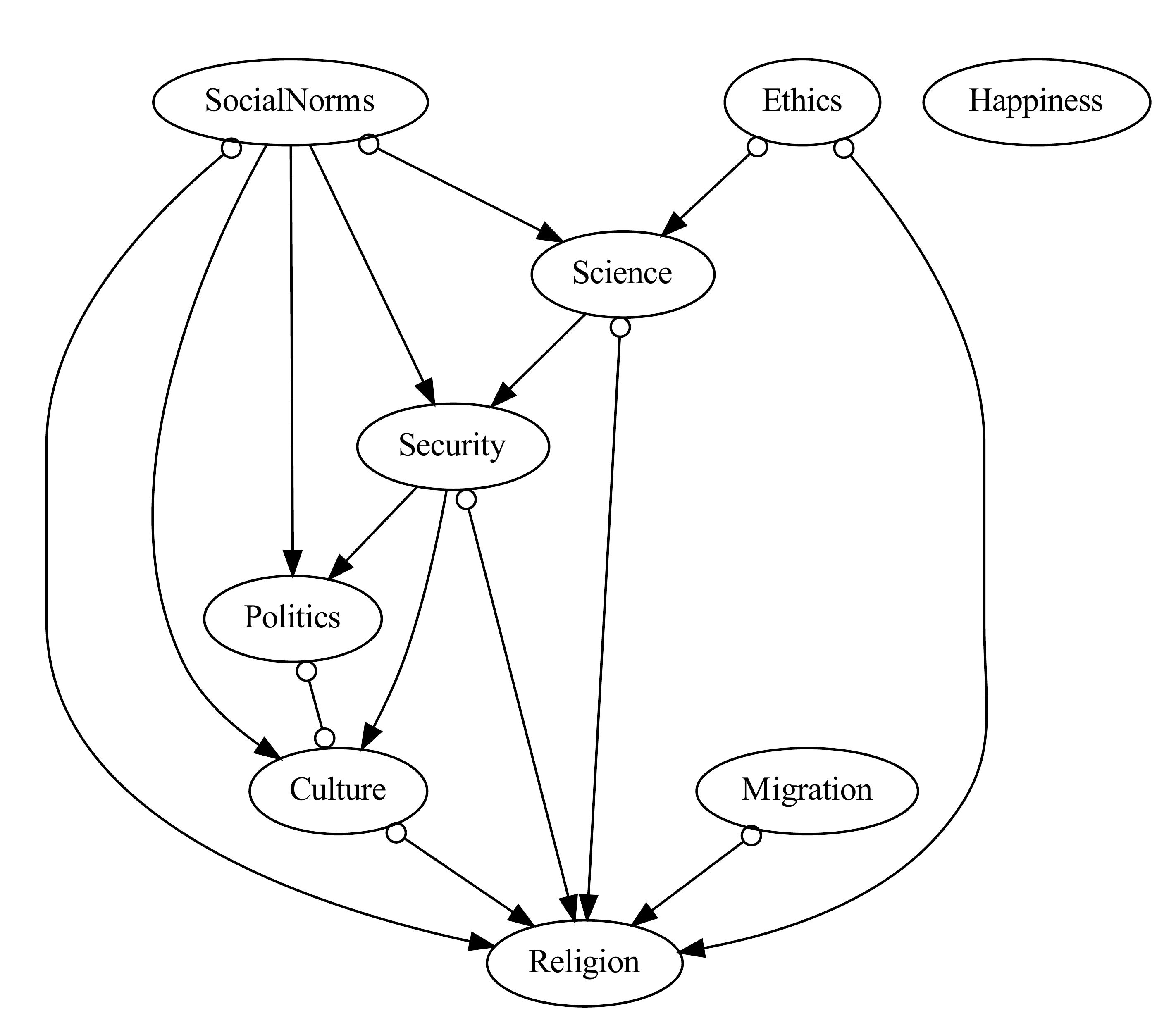}
    \vspace{-0.5em}
    \caption{Output PAG in \emph{India}.}
    \vspace{-1em}
\label{fig:pag_india}
\end{figure}

\begin{figure}[htbp]
    \centering
    \includegraphics[width=0.5\linewidth]{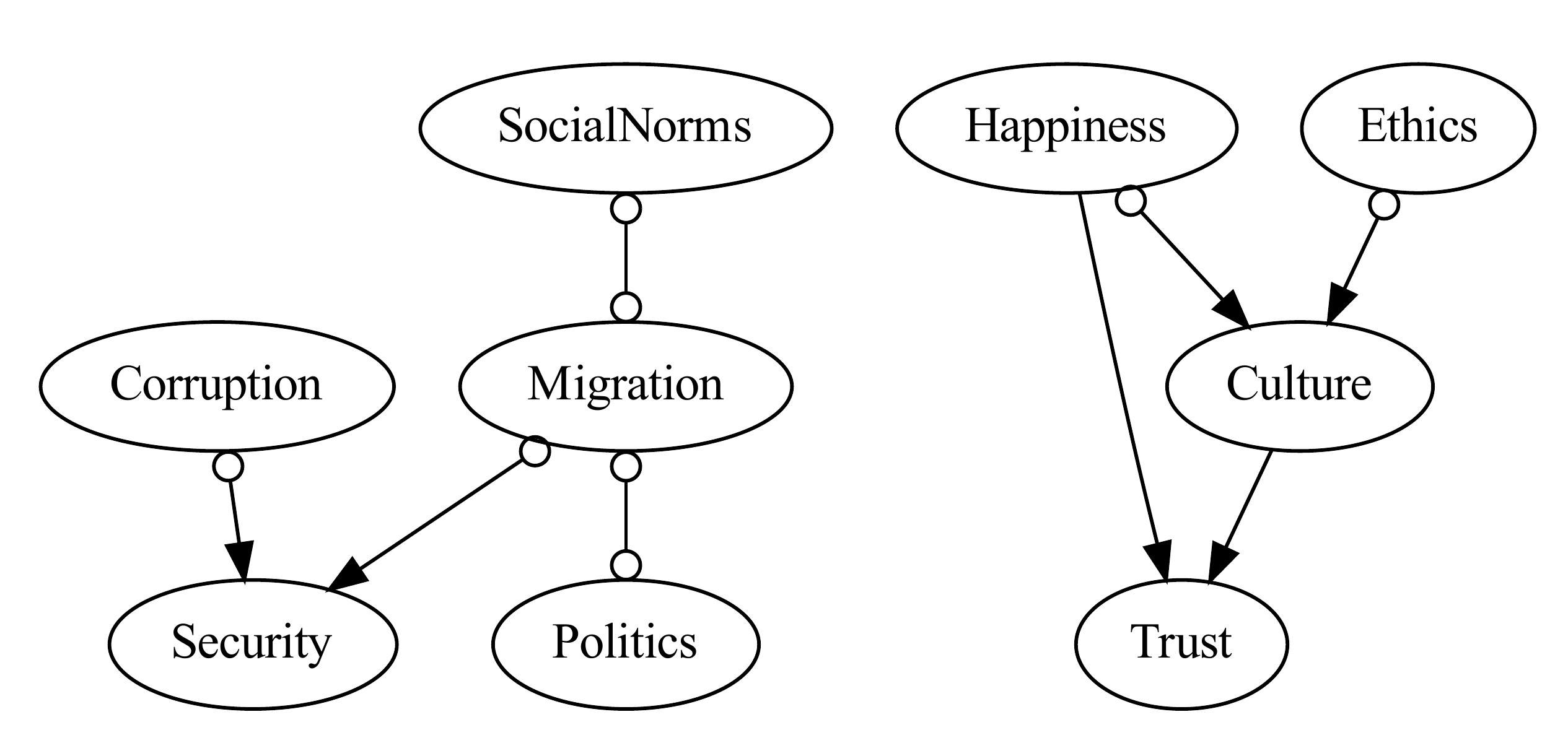}
    \vspace{-0.5em}
    \caption{Output PAG in \emph{Australia}.}
    \vspace{-1em}
\label{fig:pag_aus}
\end{figure}

\begin{figure}[htbp]
    \centering
    \includegraphics[width=0.4\linewidth]{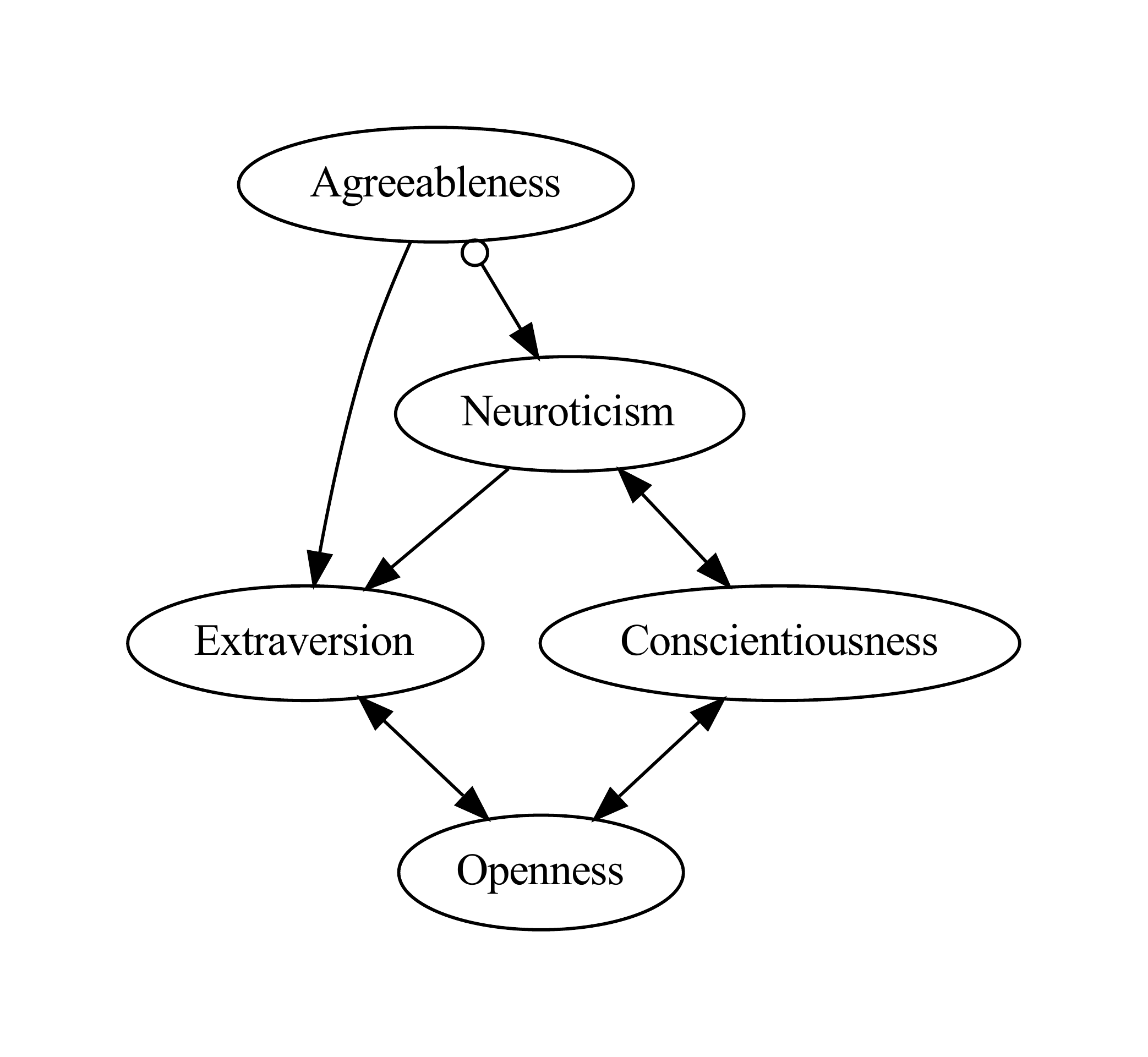}
    \vspace{-0.5em}
    \caption{Output PAG for \emph{BIG5}.}
    \vspace{-1em}
\label{fig:pag_big5}
\end{figure}

\subsection{World Value Survey Dataset Details}

The main research instrument of the World Value Survey project is a representative comparative social survey which is conducted globally every $5$ years. Extensive geographical and thematic scope, free availability of survey data and project findings for broad public turned the WVS into one of the most authoritative and widely-used cross-national surveys in the social sciences. At the moment, WVS is the largest non-commercial cross-national empirical time-series investigation of human beliefs and values ever executed.  In addition, WVS is the only academic study which covers the whole scope of global variations, from very poor to very rich societies in all world’s main cultural zones. The WVS has over the years demonstrated that people’s beliefs play a key role in economic development, the emergence and flourishing of democratic institutions, the rise of gender equality, and the extent to which societies have effective government. The full list of $13$ variables that we focus on in this study, with their corresponding indicators, are as follows:
\begin{itemize}
    \item Social Values, Norms, Stereotypes (Q1-Q45) 
\item Happiness and Wellbeing (Q46-Q56) 
\item Social Capital, Trust and Organizational Membership (Q57-Q105)
\item Economic Values (Q106-Q111) 
\item Perceptions of Corruption (Q112-Q120)
\item Perceptions of Migration (Q121-Q130) 
\item Perceptions of Security (Q131-Q151) 
\item Index of Postmaterialism (Q152-Q157) 
\item Perceptions about Science and Technology (Q158-Q163) 
\item Religious Values (Q164-Q175) 
\item Ethical Values (Q176-Q198)
\item Political Interest and Political Participation (Q199-Q234) 
\item Political Culture and Political Regimes (Q235-Q259) 
\end{itemize}

We also present below some examples of the raw questions in the section of \emph{Happiness and Wellbeing}:
\begin{itemize}
\item Q46: \emph{Feeling of happiness: Taking all things together, would you say you are?}

\item Q47: \emph{State of health (subjective): All in all, how would you describe your state of health these days?}

\item Q48: \emph{How much freedom of choice and control: Please use a scale where 1 means "none at all" and 10 means "a great deal" to indicate how much freedom of choice and control you feel you have over the way your life turns out.}

\item Q49: \emph{Satisfaction with your life: All things considered, how satisfied are you with your life as a whole these days?}

\item Q50: \emph{Satisfaction with financial situation of household: How satisfied are you with the financial situation of your household? If '1' means completely dissatisfied and '10' means completely satisfied, where would you place your satisfaction?}

\item Q51: \emph{Frequency you/family (last 12 months): Gone without enough food to eat.}

\item Q52: \emph{Frequency you/family (last 12 months): Felt unsafe from crime in your own home.}

\item Q53: \emph{Frequency you/family (last 12 months): Gone without needed medicine or treatment that you needed.}

\item Q54: \emph{Frequency you/family (last 12 months): Gone without a cash income.}

\item Q55: \emph{Frequency you/family (last 12 months): Gone without a safe shelter over your head.}

\item Q56: \emph{Standard of living comparing with your parents: Comparing your standard of living with your parents’ standard of living when they were about your age, would you say that you are better off, worse off, or about the same?}

\end{itemize}

\subsection{Data-preprocessing}

In the WVS dataset, the data preprocessing consists of two main components: sample choice and measurement variable choice. For sample choice, we remove those with data missingness in entries (e.g., some questions left unanswered or ``preferred not to answer''). For measurement variable choice, though the questions are already categorized into different aspects, to ensure that the rank method works, we find subsets from each latent variable's corresponding measurements that share a rank-1 structure against others, i.e., they indeed appear as masurements to a same latent variable.

\end{document}